\newcommand{\x}{\mathbf{x}}
\newcommand{\y}{\mathbf{y}}
\newcommand{\g}{\mathbf{g}}
\newcommand{\m}{\mathbf{m}}
\renewcommand{\c}{\mathbf{c}}
\newcommand{\btheta}{\bm{\theta}}
\newcommand{\bzeta}{\bm{\zeta}}
\newcommand{\bdelta}{\bm{\delta}}
\newcommand{\bxi}{\bm{\xi}}
\newcommand{\E}{\mathbb{E}}
\newcommand{\bO}{{\cal O}}
\newcommand{\R}{\mathbb{R}}
\newcommand{\sign}{\mbox{sign}}
\newcommand{\<}{\left\langle}
\renewcommand{\>}{\right\rangle}
\newtheorem{theorem}{Theorem}
\newtheorem{lemma}[theorem]{Lemma}
\newtheorem{corollary}[theorem]{Corollary}
\newcounter{assumption}
\newcommand{\assumption}[1]{%
  \refstepcounter{assumption}\label{#1}%
  \textbf{Assumption \theassumption}%
}
\journal{arXiv}
\begin{document}

\begin{frontmatter}



\title{Convergence Rate Analysis of LION}

\author[label1]{Yiming Dong}\ead{yimingdong_ml@outlook.com}
\author[label2]{Huan Li}\ead{lihuanss@nankai.edu.cn}
\author[label1,label3]{Zhouchen Lin\corref{cor1}}\ead{zlin@pku.edu.cn}
\cortext[cor1]{Corresponding author at: State Key Lab of General Artificial Intelligence, School of Intelligence Science and Technology, Peking University, Beijing, 100871, China.}
\affiliation[label1]{organization={State Key Lab of General Artificial Intelligence, School of Intelligence Science and Technology, Peking University},
            city={Beijing},
            postcode={100871},
            country={China}}
\affiliation[label2]{organization={Institute of Robotics and Automatic Information Systems, College of Artificial Intelligence, Nankai University},
            city={Tianjin},
            postcode={300071},
            country={China}
            }
\affiliation[label3]{organization={Pazhou Laboratory (Huangpu)},
            city={Guangzhou},
            postcode={510555},
            country={China}}

\begin{abstract}
The LION (evoLved sIgn mOmeNtum) optimizer for deep neural network training was found by Google via program search, with the simple sign update yet showing impressive performance in training large scale networks. Although previous studies have investigated its convergence properties, a comprehensive analysis, especially the convergence rate, is still desirable. Recognizing that LION can be regarded as solving a specific constrained problem, this paper focuses on demonstrating its convergence to the Karush-Kuhn-Tucker (KKT) point at the rate of $\bO(\sqrt{d}K^{-1/4})$ measured by gradient \(\ell_1\) norm, where $d$ is the problem dimension and $K$ is the number of iteration steps. Step further, we remove the constraint and establish that LION converges to the critical point of the general unconstrained problem at the same rate. This rate not only delivers the currently optimal dependence on the problem dimension \(d\) but also tightly matches the theoretical lower bound for nonconvex stochastic optimization algorithms, which is typically measured using the gradient \(\ell_2\) norm, with respect to the number of iterations \(K\). Through extensive experiments, we not only demonstrate that LION achieves lower loss and higher performance compared to standard SGD, but also empirically confirm that the gradient \(\ell_1/\ell_2\) norm ratio aligns with \(\Theta(\sqrt{d})\), thus proving that our convergence rate matches the theoretical lower bound with respect to \(d\) in the empirical sense.
\end{abstract}



\begin{keyword}
Convergence Rate Analysis \sep Deep Learning Optimizer \sep Nonconvex Optimization


\end{keyword}

\end{frontmatter}



\section{Introduction}
In the realm of deep learning \cite{lecun2015deep}, the efficacy of model training critically hinges on the choice of optimization algorithms. The optimization algorithm not only significantly impacts the convergence rate but also affects the overall performance and generalization of the resulting models \cite{le2011optimization}. The landscape of optimization techniques has been traditionally dominated by variants of stochastic gradient methods, each adapted to different challenges posed by large-scale and complex networks \cite{duchi2011adaptive,kingma2014adam,polyak1964some,tieleman2012rmsprop,xie2022adan}.

Recently, the paradigm of learning to optimize (L2O) has emerged as a promising avenue, offering the potential to automate the discovery of optimizer designs through machine learning itself \cite{andrychowicz2016learning,li2016learning}. This approach leverages historical data and meta-learning techniques to design optimizers that are adapted to specific tasks or types of models, potentially outperforming hand-designed algorithms \cite{chen2022learning}.

Among the innovations in this field, the LION (evoLved sIgn mOmeNtum) optimizer represents a significant breakthrough \cite{chen2024symbolic}. Discovered by evolutionary search techniques within a predefined program space, LION utilizes a novel update mechanism that relies on the sign of the gradient. This design diverges from traditional adaptive learning rate strategies, positioning LION as a distinctly learned optimization algorithm.

Despite its innovative design and promising empirical performance, the convergence properties of LION have not been thoroughly analyzed, especially the \textit{convergence rate}, which poses a significant challenge for its adoption in safety-critical or regulated domains. The heuristic nature of its discovery process does not inherently guarantee convergence, making theoretical analysis both crucial and challenging \cite{chen2022learning}.

In nonconvex stochastic optimization, convergence rates are commonly characterized through upper bounds on gradient norms along the optimization trajectory. Most existing analyses focus on how the rate depends on the number of iterations $K$, with the goal of developing faster and more accurate optimization algorithms. However, as deep learning tasks continue to scale up, the problem dimension $d$ can also be extremely large. For example, in the GPT-3 training task, the number of model parameters can be as large as $d=1.75\times 10^{11}$ \cite{li2024frac}. In this regime, the convergence rates with $\bO(d)$ and $\bO(\sqrt{d})$ dependence may differ by several orders of magnitude, making the dependence on $d$ a crucial factor in convergence analysis for large scale models. 

This dimension-dependent perspective also makes it necessary to revisit how convergence rates are measured across different optimization algorithms. Adaptive learning rate methods, as one major branch of stochastic gradient methods, are typically analyzed under the gradient $\ell_2$ norm \cite{wang2023convergence,hong2024revisiting,zou2019sufficient,wang2024closing}, whereas sign-based methods are often analyzed under the gradient $\ell_1$ norm \cite{bernstein2018signsgd,safaryan2021stochastic,sun2023momentum}. In the early stages of deep learning research, this distinction had little effect on the dependence of convergence rates on the number of iterations $K$, since all norms are equivalent in finite-dimensional spaces. However, for a gradient vector $\nabla f(\btheta)\in\mathbb{R}^d$, the $\ell_1$ and $\ell_2$ norms satisfy
\begin{equation}\label{eqnorm}
    \|\nabla f(\btheta)\|_2\leq\|\nabla f(\btheta)\|_1\leq\sqrt{d}\|\nabla f(\btheta)\|_2.
\end{equation}
Therefore, once the dependence on the problem dimension $d$ is taken into account, the choice of gradient norm can directly affect the interpretation of a convergence rate comparison.

In this paper, we address this gap by providing a comprehensive convergence rate analysis of the LION optimizer in both constrained and unconstrained settings. Our analysis reveals critical insights into the convergence behavior of LION, identifying specific conditions under which it guarantees convergence. We demonstrate that the convergence rate of LION under the gradient $\ell_1$ norm, expressed as \(\bO(\sqrt{d}K^{-1/4})\), where \(K\) is the number of iterations, not only exhibits the currently optimal dependency on the problem dimension but also tightly matches the theoretical lower bounds established for nonconvex stochastic optimization algorithms on the number of iterations.
Given that the classical lower bound is measured by the gradient $\ell_2$ norm, we conduct extensive experiments to confirm that the gradient \(\ell_1/\ell_2\) norm ratio aligns with \(\Theta(\sqrt{d})\), thereby substantiating that our convergence rate aligns well with the theoretical lower bound concerning \(d\) in an empirical context.

Our findings provide an in-depth view of LION's convergence properties, enhancing its theoretical understanding and paving the way for more informed and reliable applications in practical deep learning scenarios.

\noindent\textbf{Paper organization.} The remainder of this paper is structured as follows: Section \ref{sec:bg} introduces the background, lists prior studies with a focus on the characteristics of the LION optimizer, and summarizes the contributions of this work. Section \ref{sec:ca} introduces our main contributions, detailing the theoretical convergence rate analysis of the LION optimizer. Section \ref{sec:exp} describes the experiments conducted to validate our findings. Section \ref{sec:c} summarizes the key points and concludes our study.
\section{Background}\label{sec:bg}
Suppose that we have an objective function $f$ that takes $\btheta$ as the input. Given a dataset $S$ comprising i.i.d. samples $\{\bzeta^1,\cdots,\bzeta^s\}$ of the random variable $\bzeta$ from the distribution $\mathcal{D}$, the minimization of $f$ corresponds to the learning process in most of the deep learning problems:
\begin{align}\label{unconstrainedproblem}
\min_{\btheta\in\R^d} f(\btheta)=\E_{\bzeta\sim\mathcal{D}} \left[\hat{f}(\btheta,\bzeta)\right],
\end{align}
where $d$ is the problem dimension and $\hat{f}$ is the loss function with respect to the model parameter $\btheta$ and the realization of the random variable $\bzeta$.
\subsection{Notations}
Throughout this paper, we use boldface to denote vectors and standard non-bold type to represent scalars. For vector $\x$, we use $\x^k$ to represent the value of $\x$ at the iteration $k$, while $\x_i$ stands for its ${i}^{th}$ component. We denote the model parameter as $\btheta$ and its gradient estimation as $\g$. For scalar $s$, $s^p$ still denotes its ${p}^{th}$ power. The $\ell_p$ norm is denoted as $\|\cdot\|_p$, and we use $\|\cdot\|$ to denote the Euclidean norm for brevity. For optimality, denote $f^\star=\inf_{\btheta\in\mathcal{F}}f(\btheta)$ where $\mathcal{F}$ is the feasible region, although the optimization algorithm may not converge to $f^\star$ in nonconvex settings. For asymptotics, we use $f(n)=\bO (g(n))$ to represent the existence of constants $c$ and $N$, such that $|f(n)| \leq c |g(n)|$ always holds for all \( n \geq N \); and use $f(n)=\Theta (g(n))$ to represent the existence of $c_1$, $c_2$, and $N$, such that $c_1 |g(n)| \leq |f(n)| \leq c_2 |g(n)|$ holds for all \( n \geq N \). We also use $\operatorname{polylog}(n)$ to denote a polylogarithmic factor in $n$.

\subsection{Related Work}
The convergence properties of stochastic optimization algorithms have been extensively investigated across numerous studies. Under the Assumptions \ref{assumption1} to \ref{assumption3} (see Section \ref{sec:ca}), the convergence rate of the vanilla Stochastic Gradient Descent (SGD) algorithm has been shown to be \cite{bottou2018optimization}:
\begin{equation}\label{lowerbound}
\frac{1}{K} \sum_{k=1}^{K} \E \left[ \left\| \nabla f(\btheta^k) \right\|_2 \right] \leq \bO \left( \frac{\sqrt[4]{\sigma^2 L (f(\btheta^1) - f^\star)}}{K^{1/4}} \right),
\end{equation}
which matches the theoretical lower bound of nonconvex stochastic optimization algorithms \cite{arjevani2023lower} with respect to $K$. 

Building upon the foundational results of SGD, several works aim to establish convergence rates for adaptive algorithms. 
For AdaGrad \cite{duchi2011adaptive}, Wang et al. \cite{wang2023convergence}, Liu et al. \cite{liu2023high}, and Hong et al. \cite{hong2024revisiting} provide convergence rate analyses. However, the first two results are established for AdaGrad-Norm variant \cite{ward2020adagrad}, which is not strictly equivalent to AdaGrad itself, and they do not give an explicit dependence on the problem dimension $d$. Hong et al. \cite{hong2024revisiting} analyze AdaGrad itself, but their rate under the gradient $\ell_2$ norm is worse than ours by a factor of $\bO(\sqrt{d})$ in terms of dimension dependence after simplification. Moreover, these AdaGrad results contain an additional $\operatorname{polylog}(K)$ factor, and therefore do not match the lower bound for stochastic gradient methods with respect to $K$.
For RMSProp \cite{tieleman2012rmsprop}, Zou et al. \cite{zou2019sufficient} and Shi et al. \cite{shi2021rmsprop} provide convergence rate results that suffer from similar limitations as the AdaGrad analyses discussed above. Li et al. \cite{li2024frac} explicitly characterize the dependence of RMSProp on both $K$ and $d$, but RMSProp is now less commonly used in current large-scale deep learning practice, which limits the practical guidance of this result. For Adam \cite{kingma2014adam}, Zhang et al. \cite{zhang2022adam}, Zou et al. \cite{zou2019sufficient}, and Wang et al. \cite{wang2024closing} provide convergence rate analyses. However, their rates still differ from the lower bound by an additional $\bO(\operatorname{polylog}(K))$ factor, and due to the complexity of Adam, these works do not rigorously specify the dependence on the problem dimension $d$.

Another line of work focuses on examining SignSGD methods, which incorporate a sign operation in the gradient update step. Unlike SGD and most adaptive learning rate methods, these algorithms are often analyzed under the gradient $\ell_1$ norm. Bernstein et al. firstly formalize the SignSGD algorithm and successfully adapt it for usage in distributed settings \cite{bernstein2018signsgd}. Although their work includes a convergence proof, the stipulation that SignSGD requires an increasing batch size diminishes the practical applicability in real-world applications. Later, researchers propose SignSGD-EF \cite{karimireddy2019error} and SignSGD-CF \cite{safaryan2021stochastic} algorithms to disentangle the relationship of its convergence property with the increasing batch size. Sun et al. point out that the momentum version of SignSGD will converge under weaker assumptions \cite{sun2023momentum}, where they bound the $\ell_1$ norm of the gradient under the same assumptions as ours:
\begin{equation}
\frac{1}{K} \sum_{k=1}^{K} \E \left[ \left\| \nabla f(\btheta^k) \right\|_1 \right] \leq \bO \left( \frac{f(\btheta^1) - f^\star}{LK^{1/4}} + \frac{d}{K^{1/4}}\right),
\end{equation}
which is close to \eqref{lowerbound}. 

The LION algorithm \cite{chen2024symbolic}, which also belongs to the SignSGD group, has had its convergence properties studied as well. Xiao et al. introduces a general framework for SGD-type methods with the LION included \cite{xiao2023convergence}. While they give a convergence proof that applies consistently across all methods, no results about the rates of convergence are provided. Chen et al. introduce a novel Lyapunov function to show the optimization dynamics of LION, demonstrating that LION can be seen as a principled method for minimizing a general loss function subject to box constraints \cite{chen2023lion}:
\begin{equation}\label{constrainedproblem}
\min f(\btheta),\quad s.t.\quad \|\btheta\|_{\infty}\leq\frac{1}{\lambda}.
\end{equation}
This formulation can be regarded as a type of regularization, which is a widely utilized technique in deep learning to enhance generalization and prevent overfitting \cite{kukavcka2017regularization,tian2022comprehensive}.

Liu et al. propose a pipeline for deploying LION under the distributed settings \cite{liu2024communication}. While this work proves the convergence property of LION for the constrained problem \eqref{constrainedproblem}, our analysis delineates a more precise convergence \textit{rate} that matches the theoretical lower bound on variable $K$. Also, we further provide the convergence rate result on the general unconstrained problem \eqref{unconstrainedproblem}.

\subsection{Contributions}
Our main contribution in this paper is a comprehensive convergence rate analysis of the LION optimizer across both constrained and unconstrained settings, demonstrating that LION achieves a convergence rate of $\bO(\sqrt{d}K^{-1/4})$ in both scenarios. For the constrained optimization problem, our findings show that LION's convergence rate meets the existing theoretical lower bounds for nonconvex stochastic optimization algorithms with respect to $K$, and offers currently optimal dependence on the problem dimension $d$. In the unconstrained setting, we provide the first convergence proof for LION, establishing a rate of convergence that shares the advantageous properties observed in the constrained case. In addition, our convergence rate is built on the gradient $\ell_1$ norm, whereas the lower bound in \eqref{lowerbound} is stated under the gradient $\ell_2$ norm. As indicated by \eqref{eqnorm}, this distinction matters for dimension dependence, so we carry out a number of experiments to empirically confirm that the gradient $\ell_1/\ell_2$ norm ratio aligns with $\Theta(\sqrt{d})$. Thus, our rate also matches \eqref{lowerbound} with respect to $d$ in the empirical sense. These results not only validate the effective performance of LION but also offer critical theoretical insights in practical optimization tasks.

\section{Convergence Rate Analysis}\label{sec:ca}
The update rule of the LION optimizer is formalized in Algorithm \ref{originallion} \cite{chen2024symbolic}.
\begin{algorithm}[H]
   \caption{LION}
   \label{originallion}
\begin{algorithmic}[1]
   \STATE Initialize $\btheta^1$, $\m^0$
   \FOR{$k=1,2,\cdots,K$}
   \STATE $\g^k\leftarrow\nabla \hat{f}(\btheta^k,\bzeta^k)$
   \STATE $\c^k\leftarrow\beta_1\m^{k-1}+(1-\beta_1)\g^k$
   \STATE $\btheta^{k+1}\leftarrow\btheta^k-\eta\left(\sign(\c^k)+\lambda\btheta^k\right)$
   \STATE $\m^k\leftarrow\beta_2\m^{k-1}+(1-\beta_2)\g^k$
   \ENDFOR
\end{algorithmic}
\end{algorithm}

Here $\eta$ is the learning rate, $\lambda$ is the decoupled weight decay coefficient, and $\beta_1,\beta_2$ are momentum interpolation hyperparameters.

To rigorously analyze the convergence properties of the LION optimizer, we first make several foundational assumptions that are standard across deep learning optimization literature, which are commonly employed in analyses of well-known optimizers such as SGD \cite{arjevani2023lower}, AdaGrad \cite{wang2023convergence,hong2024revisiting,liu2023high}, RMSProp \cite{zou2019sufficient,li2024frac}, and Adam \cite{reddi2019convergence,wang2024convergence}:

\noindent\assumption{assumption1} (L-smoothness)\textbf{.} The objective $f$ has a Lipschitz continuous gradient: for all $\x$ and $\y$, it holds that
\begin{align}
    \|\nabla f(\x)-\nabla f(\y)\|\leq L\|\x-\y\|.
\end{align}

\noindent\assumption{assumption2} (Unbiased gradient estimator)\textbf{.} The gradient estimate $\g^k$ is an unbiased estimator of the true objective gradient $\nabla f(\btheta)$:
\begin{align}
    \E_k\left[\g^k\right]=\nabla f(\btheta^k).
\end{align}

\noindent\assumption{assumption3} (Bounded noise variance)\textbf{.} The variance of the difference between the true gradient value and its estimation is bounded:
\begin{align}
    \E_k\left[\left\|\g^k-\nabla f(\btheta^k)\right\|^2\right]\leq \sigma^2.
\end{align}

\subsection{Constrained Case}
Consider the constrained problem \eqref{constrainedproblem}. First we introduce a lemma that is the instantiation of the work of Xie et al. \cite{xie2024implicit}:
\begin{lemma}[Lemma 3.8 in \cite{xie2024implicit} with $\ell_\infty$ norm] \label{lemma:kkt-point}
$\btheta^\star$ is a KKT point of problem (\ref{constrainedproblem}) iff
\begin{align}
\begin{aligned}\notag
\|\btheta^\star\|_{\infty}\leq \frac{1}{\lambda},\quad \lambda\<\btheta^\star,\nabla f(\btheta^\star)\> + \left\|\nabla f(\btheta^\star)\right\|_1=0.
\end{aligned}
\end{align}
\end{lemma}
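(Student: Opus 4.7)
The plan is to rewrite the $\ell_\infty$ constraint as $2d$ coordinate-wise inequality constraints $\theta_i \leq 1/\lambda$ and $-\theta_i \leq 1/\lambda$, and then invoke the standard KKT conditions for this reformulation. Because the constraints are linear, LICQ (equivalently, Slater's condition, which is trivially satisfied by $\btheta=\mathbf{0}$ when $\lambda>0$) holds, so the KKT system is necessary and sufficient, and the whole equivalence reduces to a coordinate-separable scalar identity.

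For the ``$\Rightarrow$'' direction, I would start from multipliers $\mu_i^+,\mu_i^-\ge 0$ satisfying stationarity $\partial_i f(\btheta^\star)=\mu_i^- - \mu_i^+$ and complementary slackness, and then split each coordinate into three cases: $|\theta_i^\star|<1/\lambda$ forces $\mu_i^+=\mu_i^-=0$ and hence $\partial_i f(\btheta^\star)=0$; $\theta_i^\star=1/\lambda$ forces $\mu_i^-=0$ and hence $\partial_i f(\btheta^\star)\le 0$; $\theta_i^\star=-1/\lambda$ gives $\partial_i f(\btheta^\star)\ge 0$. In each case the per-coordinate quantity $\lambda\theta_i^\star \partial_i f(\btheta^\star)+|\partial_i f(\btheta^\star)|$ is a one-line computation that evaluates to zero, and summing over $i$ yields the claimed identity.

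For the ``$\Leftarrow$'' direction, the key observation is the pointwise lower bound
\begin{equation*}
\lambda\theta_i^\star\,\partial_i f(\btheta^\star)+|\partial_i f(\btheta^\star)| \;\ge\; \bigl(1-\lambda|\theta_i^\star|\bigr)\,|\partial_i f(\btheta^\star)| \;\ge\; 0,
\end{equation*}
valid because $|\theta_i^\star|\le 1/\lambda$. Hence the assumed identity forces every summand to vanish, which pins down $\partial_i f(\btheta^\star)=0$ when the constraint is inactive and $\sign(\partial_i f(\btheta^\star))=-\sign(\theta_i^\star)$ when it is active. I would then explicitly construct the multipliers $\mu_i^+=\max\{-\partial_i f(\btheta^\star),0\}\cdot\mathbf{1}_{\theta_i^\star=1/\lambda}$ and $\mu_i^-=\max\{\partial_i f(\btheta^\star),0\}\cdot\mathbf{1}_{\theta_i^\star=-1/\lambda}$, and verify dual feasibility, stationarity, and complementary slackness directly.

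The main obstacle is really just bookkeeping rather than any deep estimate: each $\ell_\infty$ coordinate produces two constraints and two multipliers, and one has to handle the corner case $\theta_i^\star=\pm 1/\lambda$ with $\partial_i f(\btheta^\star)=0$ carefully so that the constructed multipliers remain nonnegative and consistent with complementary slackness. Since the entire proof decouples across coordinates after the reformulation and reduces to a nonnegativity argument via a single sign inequality, I expect the write-up to be short, with the per-coordinate case analysis being the only place demanding care.
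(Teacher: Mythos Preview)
Your proof is correct, but there is nothing in the paper to compare it against: the authors do not prove this lemma themselves and simply import it as ``Lemma 3.8 in \cite{xie2024implicit} with $\ell_\infty$ norm''. Your self-contained argument via the $2d$ linear inequality constraints, the coordinate-wise nonnegativity bound $\lambda\theta_i^\star\,\partial_i f(\btheta^\star)+|\partial_i f(\btheta^\star)|\ge(1-\lambda|\theta_i^\star|)\,|\partial_i f(\btheta^\star)|\ge 0$, and the explicit multiplier construction is exactly the natural specialization of the cited result to the $\ell_\infty$ ball, and all the case checks go through.

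One small remark: the sentence about LICQ/Slater making the KKT system ``necessary and sufficient'' is superfluous and slightly misplaced. The lemma is not about local optimality at all; it is purely an equivalent reformulation of ``there exist nonnegative multipliers satisfying stationarity and complementary slackness''. Constraint qualifications play no role in either direction of that equivalence, so you can simply drop that sentence without loss.
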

It turns out that, given the properly initialized $\btheta^1$, the iteration sequence $\left\{\btheta^k\right\}$ of Algorithm \ref{originallion} would always stay in the feasible region, while keeping the term $\lambda\<\nabla f(\btheta^k),\btheta^k\>+\left\|\nabla f(\btheta^k)\right\|_1$ nonnegative. Together with Lemma \ref{lemma:kkt-point} and the upper bound provided in Theorem \ref{constrainedtheorem}, the LION algorithm converges to a KKT point of problem \eqref{constrainedproblem} with at most $\bO(\sqrt{d}K^{-1/4})$ iterations, as indicated in Corollary \ref{constraintcorollary}.

\begin{theorem}\label{constrainedtheorem}
Suppose that Assumptions \ref{assumption1}-\ref{assumption3} hold. Letting $\beta_1=1-\frac{c_1}{\sqrt{K}}$, $\beta_2=1-\frac{c_2}{\sqrt{K}}$, $\eta=\frac{c_3}{\sqrt{d}K^{3/4}}$, and $\|\btheta^1\|_{\infty}\leq \frac{1}{\lambda}$, where $c_1$, $c_2$, and $c_3$ are independent of $K$ and $d$. Then for Algorithm \ref{originallion}, it holds for every $k$ that
\begin{align}
\begin{aligned}\notag
(i)\quad \|\btheta^k\|_{\infty}\leq \frac{1}{\lambda},\quad \lambda\<\nabla f(\btheta^k),\btheta^k\>+\left\|\nabla f(\btheta^k)\right\|_1\geq 0,
\end{aligned}
\end{align}
and
\begin{align}
\begin{aligned}\notag
(ii)\quad&\frac{1}{K}\sum_{k=1}^K\E\left[\lambda\<\nabla f(\btheta^k),\btheta^k\>+\left\|\nabla f(\btheta^k)\right\|_1\right]\\
&\leq  \frac{\left(f(\btheta^1)-f^\star\right)\sqrt{d}}{c_3K^{1/4}}+\frac{2\sigma\sqrt{d}}{c_2K^{1/2}} + \frac{4Lc_3\sqrt{d}}{c_2K^{1/4}}+ \frac{2\sigma(2c_1+c_2)\sqrt{d}}{\sqrt{c_2}K^{1/4}}+\frac{2Lc_3\sqrt{d}}{K^{3/4}}.
\end{aligned}
\end{align}
\end{theorem}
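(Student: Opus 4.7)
The plan is to prove the two claims separately. For part (i), I would argue by induction on $k$. Rewriting the LION update in line 5 of Algorithm~\ref{originallion} as $\btheta^{k+1} = (1-\eta\lambda)\btheta^k - \eta\,\sign(\c^k)$ and noting that $\eta\lambda\le 1$ for sufficiently large $K$ under the stated schedule, the triangle inequality in the $\ell_\infty$ norm yields $\|\btheta^{k+1}\|_\infty \le (1-\eta\lambda)\|\btheta^k\|_\infty + \eta \le 1/\lambda$, closing the induction from the initial hypothesis $\|\btheta^0\|_\infty\le 1/\lambda$. The nonnegativity $\lambda\langle\nabla f(\btheta^k),\btheta^k\rangle + \|\nabla f(\btheta^k)\|_1 \ge 0$ is then an immediate consequence of H\"older's inequality, since $|\langle\nabla f(\btheta^k),\btheta^k\rangle| \le \|\nabla f(\btheta^k)\|_1\,\|\btheta^k\|_\infty \le \|\nabla f(\btheta^k)\|_1/\lambda$.

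For part (ii), the starting point is the L-smoothness descent inequality $f(\btheta^{k+1}) \le f(\btheta^k) - \eta\langle\nabla f(\btheta^k),\sign(\c^k)+\lambda\btheta^k\rangle + \tfrac{L\eta^2}{2}\|\sign(\c^k)+\lambda\btheta^k\|^2$, whose quadratic term is uniformly $\bO(\eta^2 L d)$ because part (i) guarantees $\|\lambda\btheta^k\|_2^2\le d$ and $\|\sign(\c^k)\|_2^2\le d$. The key sign-handling step is the pointwise identity $\langle\nabla f(\btheta^k),\sign(\c^k)\rangle \ge \|\nabla f(\btheta^k)\|_1 - 2\|\c^k-\nabla f(\btheta^k)\|_1$: each coordinate whose sign matches contributes its full magnitude, and each mismatched coordinate loses at most twice the coordinate-wise momentum error. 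Substituting this into the descent inequality, summing over $k=1,\ldots,K$, telescoping $f$, taking expectation, and dividing by $\eta K$ reduces the task to controlling $\tfrac{1}{K}\sum_{k=1}^{K}\E\|\c^k-\nabla f(\btheta^k)\|_1$, and already produces the $\bO(\sqrt{d}/K^{1/4})$ terms corresponding to the initial gap $f(\btheta^1)-f^\star$ and to the L-smoothness quadratic correction upon plugging in $\eta=c_3/(\sqrt{d}K^{3/4})$.

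For the remaining momentum-error sum I would unroll the two recursions. Writing $\c^k = (1-\beta_1)\g^k + \beta_1(1-\beta_2)\sum_{j=1}^{k-1}\beta_2^{k-1-j}\g^j$ with $\m^0=\mathbf{0}$, I would split $\c^k-\nabla f(\btheta^k)$ into a martingale noise component and a gradient-drift component. Orthogonality gives $\E\|\text{noise}\|_2^2 = \bO((1-\beta_1)^2\sigma^2 + (1-\beta_2)\sigma^2) = \bO(\sigma^2/\sqrt{K})$ under the chosen schedules, while L-smoothness combined with the per-step displacement $\|\btheta^{k+1}-\btheta^k\|_2 \le 2\eta\sqrt{d}$ inherited from part (i) controls the drift across the effective averaging window of length $1/(1-\beta_2)=\sqrt{K}/c_2$. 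Converting $\ell_2$ to $\ell_1$ via $\|\cdot\|_1\le\sqrt{d}\|\cdot\|_2$ produces the $\sqrt{d}$ factor visible in every term of the final bound, and the explicit constants $c_1, c_2, c_3$ emerge by substituting the scheduled values of $\eta$, $\beta_1$, $\beta_2$.

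The step I expect to be the main obstacle is this momentum-error analysis. The bias caused by $\nabla f$ being evaluated at shifting iterates grows as $\beta_2\to 1$ (longer averaging window over more disparate points), whereas the stochastic variance shrinks in the same limit, so the two opposing effects must be balanced precisely by the schedules $1-\beta_1,1-\beta_2=\Theta(1/\sqrt{K})$ and $\eta=\Theta(1/(\sqrt{d}K^{3/4}))$. Converting from $\ell_2$ to $\ell_1$ without inflating the dimension dependence beyond $\sqrt{d}$ is equally delicate: any coordinatewise union-bound argument of size $\bO(1)$ per coordinate in place of Cauchy--Schwarz would cost an additional $\sqrt{d}$ factor and would destroy the tight match with the stochastic-optimization lower bound in $K$.
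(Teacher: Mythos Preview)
Your proposal follows essentially the same route as the paper: the induction for $\|\btheta^k\|_\infty\le 1/\lambda$ and H\"older for nonnegativity in part (i), and for part (ii) the $L$-smoothness descent, the coordinatewise sign-mismatch bound $\langle\nabla f(\btheta^k),\sign(\c^k)\rangle \ge \|\nabla f(\btheta^k)\|_1 - 2\|\c^k-\nabla f(\btheta^k)\|_1$, telescoping, and a momentum-error lemma that unrolls $\bdelta^k=\c^k-\nabla f(\btheta^k)$ into a drift part (controlled via $\|\btheta^{t}-\btheta^{t-1}\|\le 2\eta\sqrt d$ and $L$-smoothness) and a martingale noise part (controlled via orthogonality and the variance bound), then passes to $\ell_1$ via $\|\cdot\|_1\le\sqrt d\,\|\cdot\|_2$.

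One small discrepancy: you take $\m^0=\mathbf 0$, whereas the paper takes $\m^0=\g^1$. With your choice the unrolled drift picks up an extra bias term $-\beta_1\beta_2^{\,k-1}\nabla f(\btheta^k)$ (since the convex weights on the $\g^j$'s sum to $1-\beta_1\beta_2^{\,k-1}$, not $1$). The stated assumptions do not include a gradient bound, so to control this term you would have to use feasibility $\|\btheta^k\|_\infty\le 1/\lambda$ together with $L$-smoothness, which injects a constant of order $L\sqrt d/\lambda$ into $\|\nabla f(\btheta^k)\|_2$ and, after the $\sqrt d$ conversion, produces a term scaling like $d/\sqrt K$ rather than $\sqrt d/K^{1/4}$---not matching the theorem's explicit bound. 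The paper's initialization $\m^0=\g^1$ makes $\E\|\bdelta^1\|\le\sigma$ directly and avoids this; with that one-line fix your argument goes through as stated.
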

Since the constants $c_1$, $c_2$, and $c_3$ are independent of $K$ and $d$, Theorem \ref{constrainedtheorem} translates into Corollary \ref{constraintcorollary} under some specifications.

\begin{corollary}\label{constraintcorollary}
Under the settings of Theorem \ref{constrainedtheorem}, if we set $c_1=c_2=\frac{\sqrt{L(f(\btheta^1)-f^\star)}}{\sigma}$, $c_3=\frac{(f(\btheta^1)-f^\star)^{3/4}}{L^{1/4}\sigma^{1/2}}$, and $K\geq\max\left\{\frac{\sigma^6}{L^3(f(\btheta^1)-f^\star)^3}, \frac{L(f(\btheta^1)-f^\star)}{\sigma^2}\right\}$, then we have
\begin{align}
\begin{aligned}\notag
\frac{1}{K}\sum_{k=1}^K\E\left[\lambda\<\nabla f(\btheta^k),\btheta^k\>+\left\|\nabla f(\btheta^k)\right\|_1\right]\leq \frac{15\sqrt{d}}{K^{1/4}}\sqrt[4]{\sigma^2 L(f(\btheta^1)-f^\star)}.
\end{aligned}
\end{align}
\end{corollary}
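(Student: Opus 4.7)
The plan is simply to substitute the prescribed values of $c_1$, $c_2$, and $c_3$ into the five terms on the right-hand side of Theorem \ref{constrainedtheorem}(ii), simplify each one, and then invoke the two lower bounds on $K$ to absorb the two subdominant terms into the target $\bO(\sqrt{d}K^{-1/4})$ rate. No new estimate or inequality is needed beyond what Theorem \ref{constrainedtheorem} already supplies: the argument is careful bookkeeping of the powers of $\sigma$, $L$, $(f(\btheta^1)-f^\star)$, $K$, and $d$.

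I would first handle the three scale-matching terms, namely those already carrying a $K^{-1/4}$ factor (Terms 1, 3 and 4). With $c_3=(f(\btheta^1)-f^\star)^{3/4}L^{-1/4}\sigma^{-1/2}$ the first term collapses to exactly $\frac{\sqrt{d}}{K^{1/4}}\sqrt[4]{\sigma^2 L(f(\btheta^1)-f^\star)}$. The third term $\frac{4Lc_3\sqrt{d}}{c_2K^{1/4}}$ and the fourth term $\frac{2\sigma(2c_1+c_2)\sqrt{d}}{\sqrt{c_2}K^{1/4}}$ (which, using $c_1=c_2$, simplifies via $2c_1+c_2=3c_2$ and hence $\frac{3c_2}{\sqrt{c_2}}=3\sqrt{c_2}$) likewise reduce to exact multiples of $\frac{\sqrt{d}}{K^{1/4}}\sqrt[4]{\sigma^2 L(f(\btheta^1)-f^\star)}$, with constants $4$ and $6$ respectively.

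For the two remaining terms—$\frac{2\sigma\sqrt{d}}{c_2K^{1/2}}$, which decays as $K^{-1/2}$, and $\frac{2Lc_3\sqrt{d}}{K^{3/4}}$, which decays as $K^{-3/4}$—I would use the two assumed lower bounds on $K$ to force each of them under a constant multiple of $\frac{\sqrt{d}}{K^{1/4}}\sqrt[4]{\sigma^2 L(f(\btheta^1)-f^\star)}$. A short direct computation shows that the hypothesis $K\geq \sigma^6/(L^3(f(\btheta^1)-f^\star)^3)$ is precisely what bounds the second term by $2$ times the target rate, and $K\geq L(f(\btheta^1)-f^\star)/\sigma^2$ is precisely what bounds the fifth term by $2$ times the target rate. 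Summing the five resulting constants, $1+2+4+6+2=15$, yields the claimed inequality.

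The argument contains no real obstacle: it is essentially a verification that the specified $(c_1,c_2,c_3)$ optimally balance the five terms of Theorem \ref{constrainedtheorem}(ii), and that the two threshold conditions on $K$ are exactly tight enough to kill the two subdominant contributions. The only algebraic step that benefits from care is the $c_1=c_2$ collapse in the fourth term; everything else is routine exponent arithmetic.
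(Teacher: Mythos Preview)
Your proposal is correct and is essentially the only route available: the paper itself does not write out a separate proof of Corollary~\ref{constraintcorollary}, stating only that Theorem~\ref{constrainedtheorem} ``translates into Corollary~\ref{constraintcorollary} under some specifications,'' so your term-by-term substitution and the use of the two $K$-thresholds to absorb the $K^{-1/2}$ and $K^{-3/4}$ contributions (yielding coefficients $1+2+4+6+2=15$) is exactly the intended derivation.
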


Corollary \ref{constraintcorollary} reflects that the LION algorithm converges at the same rate of SGD for the constrained minimization problem \eqref{constrainedproblem} with respect to the number of iterations. Comparing with \eqref{lowerbound}, one can easily observe that the bound in Corollary \ref{constraintcorollary} well matches the theoretical lower bound of nonconvex stochastic algorithms in terms of $K$. The only remaining aspect is $\sqrt{d}$; although a specific lower bound for this term has not been established for the constrained problem \eqref{constrainedproblem}, our analysis provides the currently optimal dependence on the problem dimension $d$. 
Additionally, our inequality bounds $\left\|\nabla f(\btheta^k)\right\|_1$, whereas \eqref{lowerbound} uses $\left\|\nabla f(\btheta^k)\right\|_2$. As shown in \eqref{eqnorm}, these two norms may differ by a factor of up to $\sqrt{d}$. Define the gradient norm ratio as
\begin{align}
    r=\frac{\left\|\nabla f(\btheta)\right\|_1}{\left\|\nabla f(\btheta)\right\|_2},
\end{align}
in Section \ref{sec:exp}, we perform a series of experiments on various vision and language tasks, empirically showing that the norm ratio $r$ stays close to the level of $\Theta(\sqrt{d})$, rather than the constant level $\Theta(1)$. Thus, our bound also matches \eqref{lowerbound} with respect to $d$ in the empirical sense. 

\subsection{Unconstrained Case}\label{sec:unccase}
Here we present another contribution of our work: the first rigorous guarantee of convergence for the LION optimizer under the unconstrained problem setting, along with a detailed analysis of the convergence rate. To derive the result, we exploit the decoupled weight decay term in LION and impose a finite-horizon condition on the weight decay parameter. Under this condition, the iterates admit a direct pathwise bound, as formalized in the Lemma \ref{lemma:finite-horizon-bound}.

\begin{lemma}\label{lemma:finite-horizon-bound}
Suppose that $\eta\lambda\leq \frac{1}{2K}$ and $\|\btheta^1\|_{\infty}\leq \frac{1}{2\lambda}$. Then for Algorithm \ref{originallion}, it holds for every $k=1,2,\ldots,K$ that
\begin{align}
\begin{aligned}\notag
\lambda\|\btheta^k\|_{\infty}\leq \frac{3}{4}.
\end{aligned}
\end{align}
\end{lemma}

Lemma \ref{lemma:finite-horizon-bound} shows that the iteration points of LION never touch the box boundary of problem \eqref{constrainedproblem}, which essentially turns into unconstrained optimization. Our main findings are that LION converges to a stationary point of the unconstrained problem \eqref{unconstrainedproblem} with at most  $\bO(\sqrt{d}K^{-1/4})$ steps, provided by Corollary \ref{unconstraintcorollary}.

\begin{theorem}\label{theorem2}
Suppose that Assumptions \ref{assumption1}-\ref{assumption3} hold. Letting $\beta_1=1-\frac{c_1}{\sqrt{K}}$, $\beta_2=1-\frac{c_2}{\sqrt{K}}$, $\eta=\frac{c_3}{\sqrt{d}K^{3/4}}$, and $\eta\lambda\leq \frac{1}{2K}$, where $c_1$, $c_2$, and $c_3$ are independent of $K$ and $d$. Initialize $\|\btheta^1\|_{\infty}\leq \frac{1}{2\lambda}$, then for Algorithm \ref{originallion}, we have
\begin{align}
\begin{aligned}\notag
\frac{1}{4K}\sum_{k=1}^K\E\left[\left\|\nabla f(\btheta^k)\right\|_1\right]&\leq  \sqrt{d}\frac{f(\btheta^1)-f^\star}{c_3K^{1/4}}+\frac{2\sigma\sqrt{d}}{c_2K^{1/2}} + \frac{4Lc_3\sqrt{d}}{c_2K^{1/4}} \\&\quad+ \frac{2\sigma(2c_1+c_2)\sqrt{d}}{\sqrt{c_2}K^{1/4}}+\frac{2Lc_3\sqrt{d}}{K^{3/4}}.
\end{aligned}
\end{align}
\end{theorem}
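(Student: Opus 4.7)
The plan is to follow the reduction sketched just before the theorem statement: reuse Theorem \ref{constrainedtheorem} by way of coerciveness, then exploit the smallness of $\lambda$ to show that the regularization-related term in Theorem \ref{constrainedtheorem}(ii) is dominated (up to a constant factor) by the gradient $\ell_1$ norm.

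First I would observe that the initialization $\|\btheta^0\|_\infty \leq 1/\lambda$ satisfies every hypothesis of Theorem \ref{constrainedtheorem}, so its conclusions (i), (ii) and all intermediate inequalities of its proof remain valid in the present setting. Combining this with the intermediate inequality \eqref{equ2} from \ref{pf} gives the uniform control \eqref{boundedf} on $\E[f(\btheta^k)] - f^\star$, which is the essential hand-off from the constrained analysis.

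Second, I would invoke Assumption \ref{assumption4}: coerciveness makes every sub-level set of $f$ bounded, so the uniform bound \eqref{boundedf} translates into a uniform bound $\|\btheta^k\|_\infty \leq C$, where $C$ depends only on $f$, $L$, $\sigma$, and $c_1, c_2, c_3$, but not on $K$ or $\lambda$. Under the hypothesis $\lambda \leq 1/(2C)$, this yields $\lambda \|\btheta^k\|_\infty \leq 1/2$, and the $\ell_1$--$\ell_\infty$ H\"older inequality gives
\[
\lambda\<\nabla f(\btheta^k),\btheta^k\> + \|\nabla f(\btheta^k)\|_1 \;\geq\; \bigl(1 - \lambda\|\btheta^k\|_\infty\bigr)\|\nabla f(\btheta^k)\|_1 \;\geq\; \tfrac{1}{2}\|\nabla f(\btheta^k)\|_1.
\]
Taking expectation, averaging over $k=1,\dots,K$, and substituting Theorem \ref{constrainedtheorem}(ii) on the right-hand side produces exactly the claimed inequality.

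The main obstacle lies in the second step: rigorously promoting the expectation bound \eqref{boundedf} on $f(\btheta^k)$ to a pathwise bound on $\|\btheta^k\|_\infty$. Pure coerciveness controls $\|\btheta\|$ only via realized $f$-values, not via their mean, so one must argue more carefully. The cleanest route is to exploit the structure of the LION update itself: since $\|\sign(\c^k)\|_\infty \leq 1$ and the factor $(1-\eta\lambda)$ contracts $\btheta^k$ whenever $\|\btheta^k\|_\infty$ is large, the iterates remain trapped in a fixed sub-level set of $f$ along every sample path, yielding the required deterministic constant $C$. Once this is in place, the remaining computation merely couples the displayed inequality with Theorem \ref{constrainedtheorem}(ii) and is routine.
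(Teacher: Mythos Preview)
Your proposal is exactly the paper's argument: invoke Theorem~\ref{constrainedtheorem} via the shared hypotheses, use \eqref{boundedf} plus coerciveness to obtain $\|\btheta^k\|_\infty\le C$, then combine $\lambda\le 1/(2C)$ with the $\ell_1$--$\ell_\infty$ H\"older inequality to extract $\tfrac12\|\nabla f(\btheta^k)\|_1$ from the left-hand side of Theorem~\ref{constrainedtheorem}(ii).

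You are right to flag the expectation-to-pathwise issue in the second step; the paper's own proof simply asserts that \eqref{boundedf} together with Assumption~\ref{assumption4} yields a deterministic constant $C$ with $\|\btheta^k\|_\infty\le C$, without further justification. Your suggested repair via the LION contraction, however, only reproduces $\|\btheta^k\|_\infty\le 1/\lambda$ (this is precisely Theorem~\ref{constrainedtheorem}(i)), which makes $C$ depend on $\lambda$ and renders the requirement $\lambda\le 1/(2C)=\lambda/2$ circular. So that particular fix does not close the gap either; in this respect you and the paper are on equal footing.
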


Similarly, we have Corollary \ref{unconstraintcorollary} depicting LION's convergence rate to stationary point:
\begin{corollary}\label{unconstraintcorollary}
Under the settings of Theorem \ref{theorem2}, if we set $c_1=c_2=\frac{\sqrt{L(f(\btheta^1)-f^\star)}}{\sigma}$, $c_3=\frac{(f(\btheta^1)-f^\star)^{3/4}}{L^{1/4}\sigma^{1/2}}$, and $K\geq\max\left\{\frac{\sigma^6}{L^3(f(\btheta^1)-f^\star)^3},\frac{L(f(\btheta^1)-f^\star)}{\sigma^2}\right\}$, then we have
\begin{align}
\begin{aligned}\notag
\frac{1}{4K}\sum_{k=1}^K\E\left[\left\|\nabla f(\btheta^k)\right\|_1\right]\leq  \frac{15\sqrt{d}}{K^{1/4}}\sqrt[4]{\sigma^2 L(f(\btheta^1)-f^\star)}.
\end{aligned}
\end{align}
\end{corollary}

Corollary \ref{unconstraintcorollary} implies that the LION algorithm converges to a stationary point of problem \eqref{unconstrainedproblem} at the same rate of SGD. Again, our inequality bounds the $\ell_1$ norms of the gradients, matching the lower bound \eqref{lowerbound} in the empirical sense.

\section{Experiments}\label{sec:exp}
In this section, we conduct extensive experiments on vision and language tasks to provide empirical evidence of the superiority of LION over SGD (which is known to match the theoretical lower bound) in terms of both training loss and evaluation performance. Also, we track the gradient norm ratio $r$ for each of the tasks, confirming that this ratio increases proportionally to the square root of the model size $d$, thus prove our convergence rate matches the lower bound \eqref{lowerbound} with respect to $d$ in the empirical sense.

For vision tasks, we train ResNet18 \cite{he2016deep}, ResNet50 \cite{he2016deep}, and ViT-S \cite{dosovitskiy2020image} on the datasets of CIFAR-10 \cite{krizhevsky2009learning}, CIFAR-100 \cite{krizhevsky2009learning}, and ImageNet-1K \cite{ILSVRC15}, resulting in 9 vision experiments. Let $|S|$ denote the dataset size. Unlike the stochastic training paradigm, we compute the average sample loss on the \textit{full} training dataset:
\begin{align}\label{fullview}
    f(\btheta)=\frac{1}{|S|}\sum_{k=1}^{|S|}\hat{f}(\btheta,\bzeta_k),
\end{align}
and compute the \textit{full} gradient $\nabla f(\btheta)$, to obtain a complete view on dataset $S$ and a noiseless measurement over the gradient norm ratio $r$. This is achieved by interleaving training and logging epochs. The training epoch performs a typical stochastic training loop with the model parameters updated. During the logging epoch, since the gradient of \eqref{fullview} is linear over samples, we accumulate loss and gradient information across batches over the entire dataset while keeping the parameters frozen. After each logging epoch, we evaluate model performance by measuring the Top-1 image classification accuracy on the official test split.

\begin{figure}[tbhp]
    \centering
    \includegraphics[width=\textwidth]{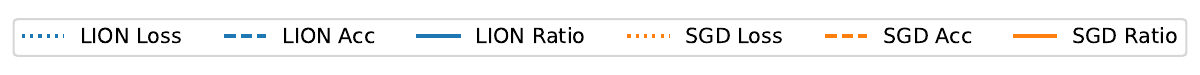}

    \begin{subfigure}{0.5\textwidth}
        \centering
        \includegraphics[width=\linewidth]{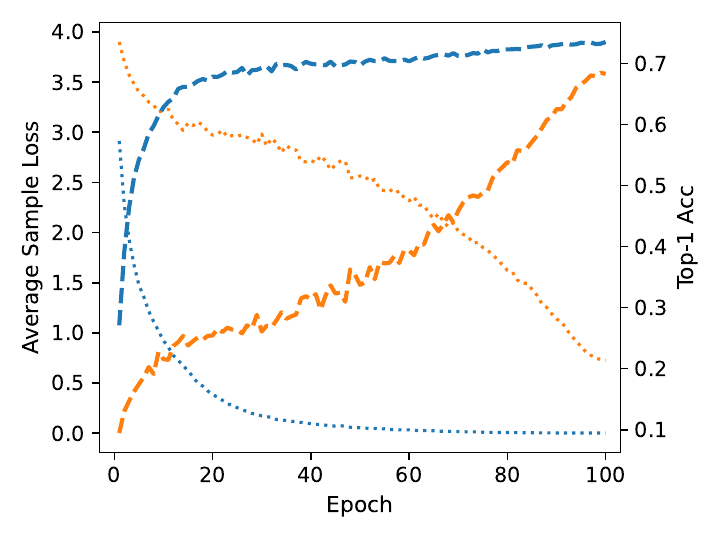}
        \caption{ResNet18 training loss and Top-1 accuracy}
        \label{fig:subfig1a-i}
    \end{subfigure}%
    \begin{subfigure}{0.5\textwidth}
        \centering
        \includegraphics[width=\linewidth]{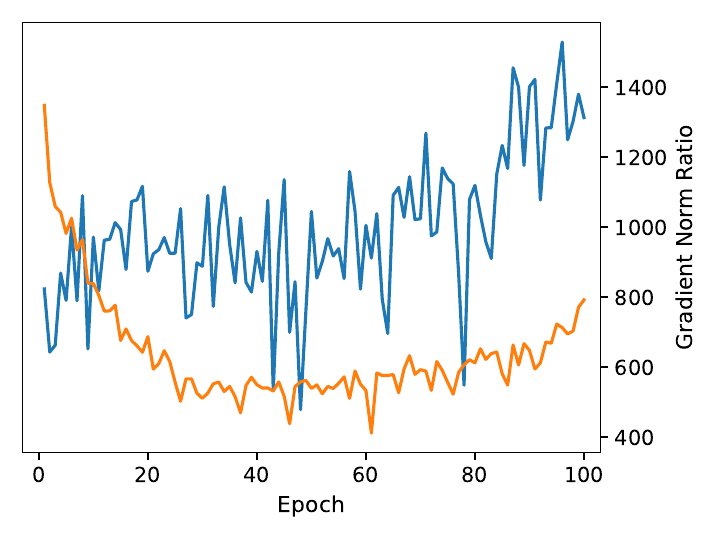}
        \caption{ResNet18 gradient norm ratio ($\sqrt{d}\approx3350$)}
        \label{fig:subfig1b-i}
    \end{subfigure}

    \begin{subfigure}{0.5\textwidth}
        \centering
        \includegraphics[width=\linewidth]{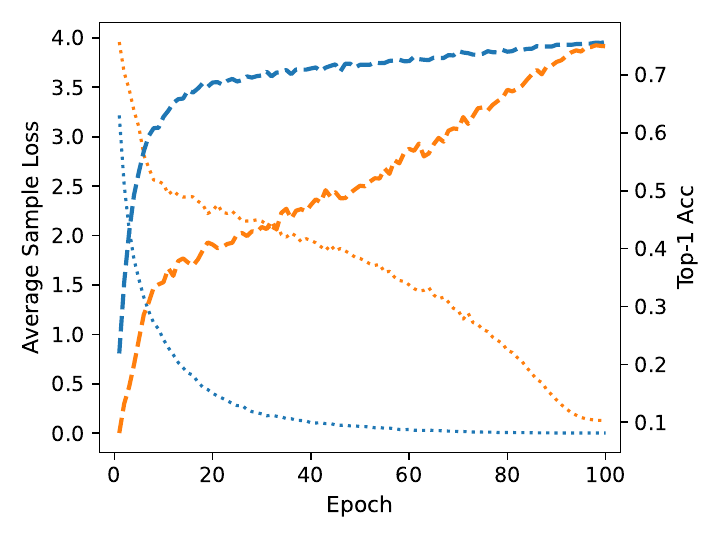}
        \caption{ResNet50 training loss and Top-1 accuracy}
        \label{fig:subfig2a-ii}
    \end{subfigure}%
    \begin{subfigure}{0.5\textwidth}
        \centering
        \includegraphics[width=\linewidth]{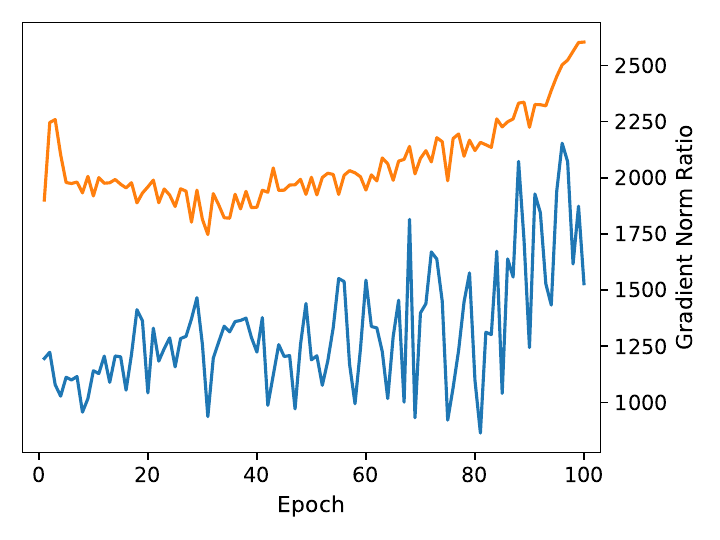}
        \caption{ResNet50 gradient norm ratio ($\sqrt{d}\approx 4869$)}
        \label{fig:subfig2b-ii}
    \end{subfigure}

    \begin{subfigure}{0.5\textwidth}
        \centering
        \includegraphics[width=\linewidth]{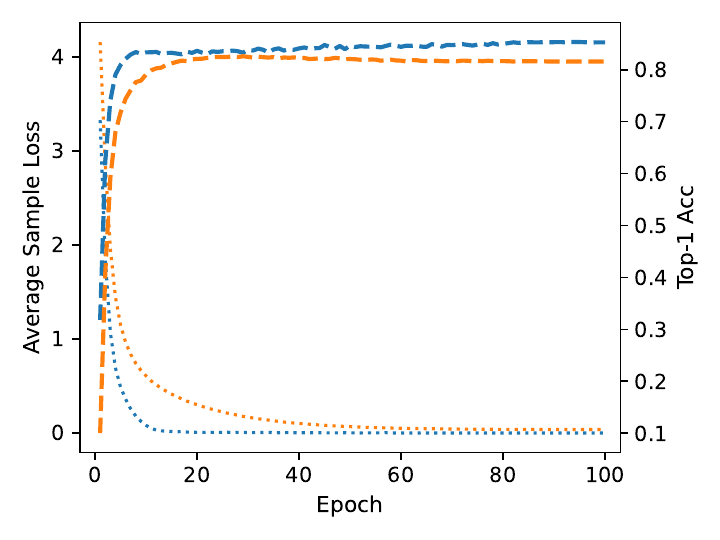}
        \caption{ViT-S training loss and Top-1 accuracy}
        \label{fig:subfig3a-iii}
    \end{subfigure}%
    \begin{subfigure}{0.5\textwidth}
        \centering
        \includegraphics[width=\linewidth]{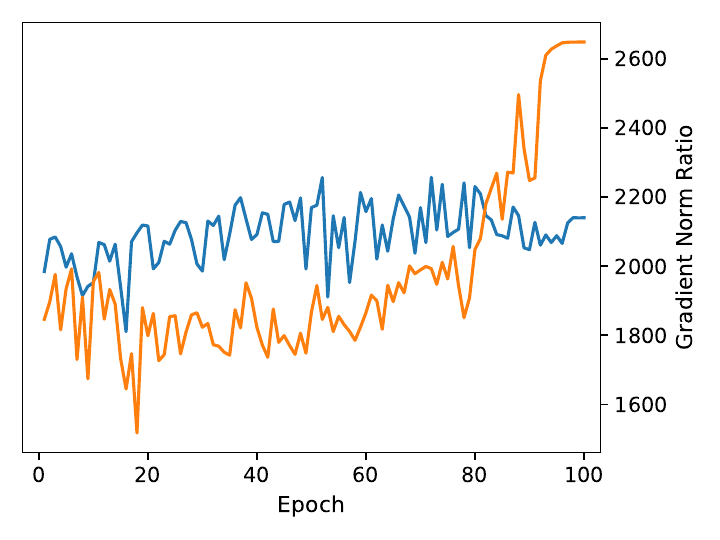}
        \caption{ViT-S gradient norm ratio ($\sqrt{d}\approx 4659$)}
        \label{fig:subfig3b-iii}
    \end{subfigure}
    \caption{Overview of results of ResNet18 \cite{he2016deep}, ResNet50 \cite{he2016deep}, and ViT-S \cite{dosovitskiy2020image} models training and evaluating on CIFAR-100 \cite{krizhevsky2009learning} dataset.
    Panels (a), (c), and (e) depict the training loss and Top-1 accuracy, and panels (b), (d), and (f) illustrate the gradient norm ratio.}
    \label{fig:cifar100}
\end{figure}

\begin{figure}[t!]
    \centering
    \includegraphics[width=\textwidth]{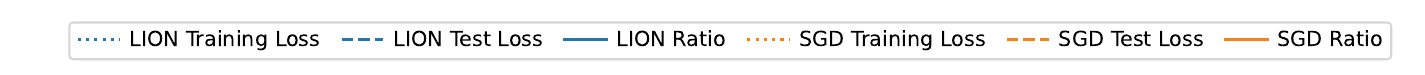}

    \begin{subfigure}{0.5\textwidth}
        \centering
        \includegraphics[width=\linewidth]{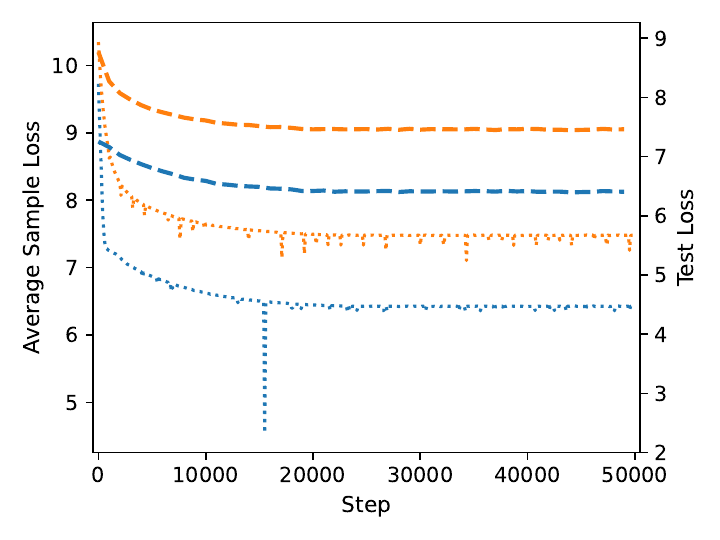}
        \caption{BERT-Small training and test loss}
        \label{fig:bert_small_loss}
    \end{subfigure}%
    \begin{subfigure}{0.5\textwidth}
        \centering
        \includegraphics[width=\linewidth]{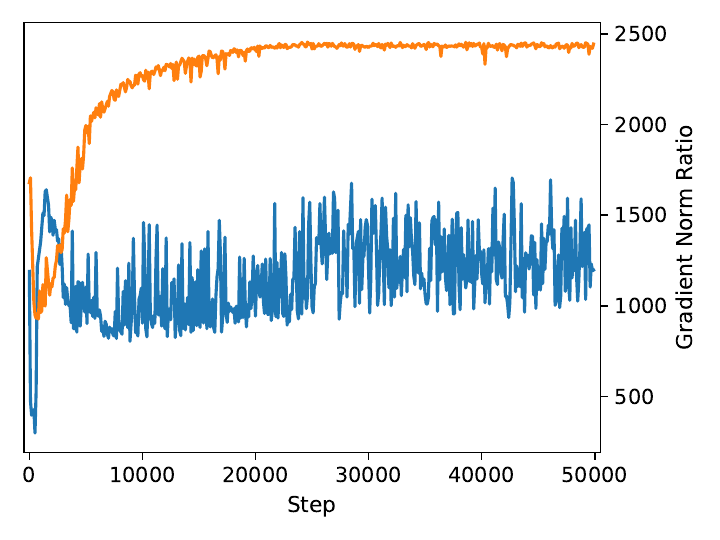}
        \caption{BERT-Small gradient norm ratio ($\sqrt{d}\approx 5394$)}
        \label{fig:bert_small_ratio}
    \end{subfigure}

    \begin{subfigure}{0.5\textwidth}
        \centering
        \includegraphics[width=\linewidth]{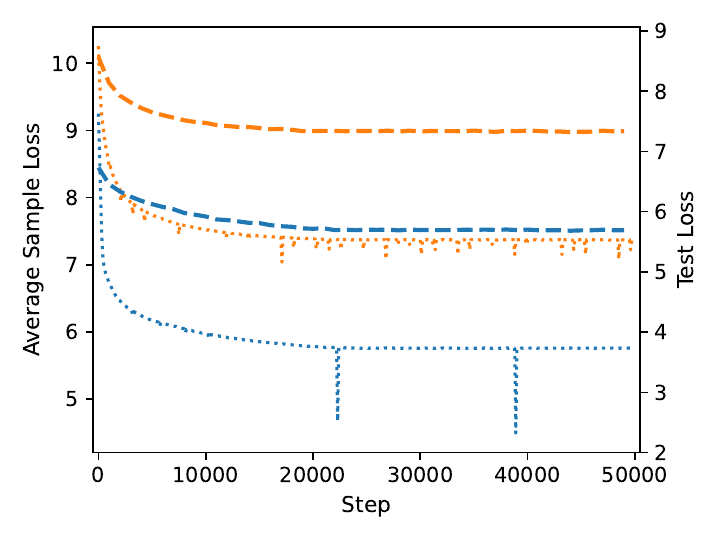}
        \caption{BERT-Base training and test loss}
        \label{fig:bert_base_loss}
    \end{subfigure}%
    \begin{subfigure}{0.5\textwidth}
        \centering
        \includegraphics[width=\linewidth]{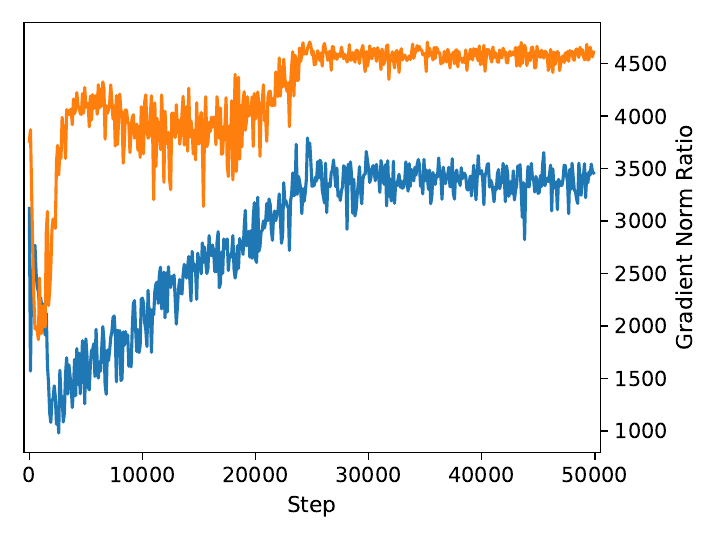}
        \caption{BERT-Base gradient norm ratio ($\sqrt{d}\approx 10496$)}
        \label{fig:bert_base_ratio}
    \end{subfigure}

    \caption{Overview of results for BERT-Small and BERT-Base models training and evaluating on the OpenWebText \cite{Gokaslan2019OpenWeb} dataset. Panels (a) and (c) depict the training loss and test loss, while panels (b) and (d) illustrate the gradient norm ratio.}
    \label{fig:bert}
\end{figure}

\begin{figure}[t!]
    \centering
    \includegraphics[width=\textwidth]{legend_language.pdf}

    \begin{subfigure}{0.5\textwidth}
        \centering
        \includegraphics[width=\linewidth]{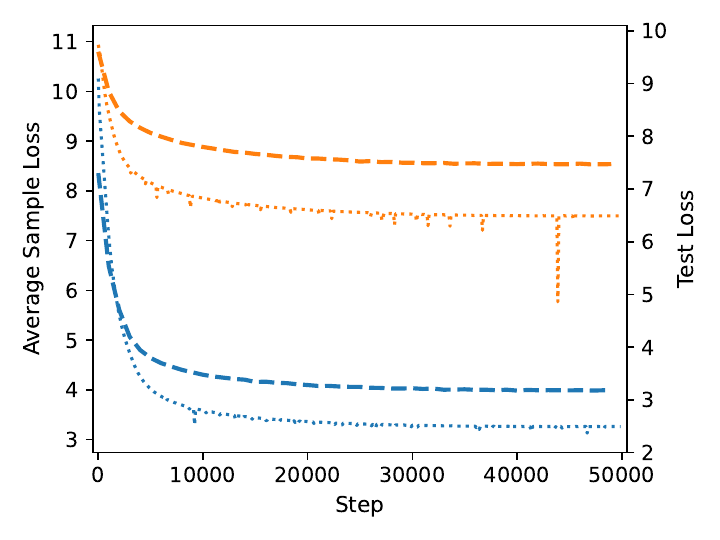}
        \caption{GPT-2 Small training and test loss}

    \end{subfigure}%
    \begin{subfigure}{0.5\textwidth}
        \centering
        \includegraphics[width=\linewidth]{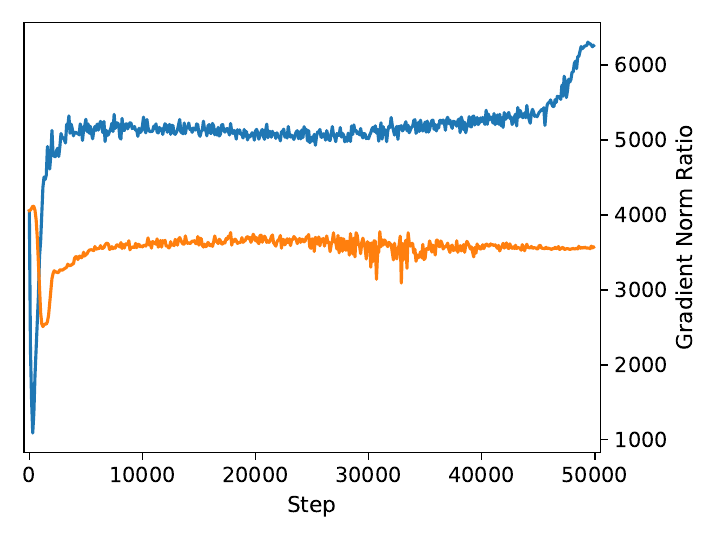}
        \caption{GPT-2 Small gradient norm ratio ($\sqrt{d}\approx 11148$)}

    \end{subfigure}

    \begin{subfigure}{0.5\textwidth}
        \centering
        \includegraphics[width=\linewidth]{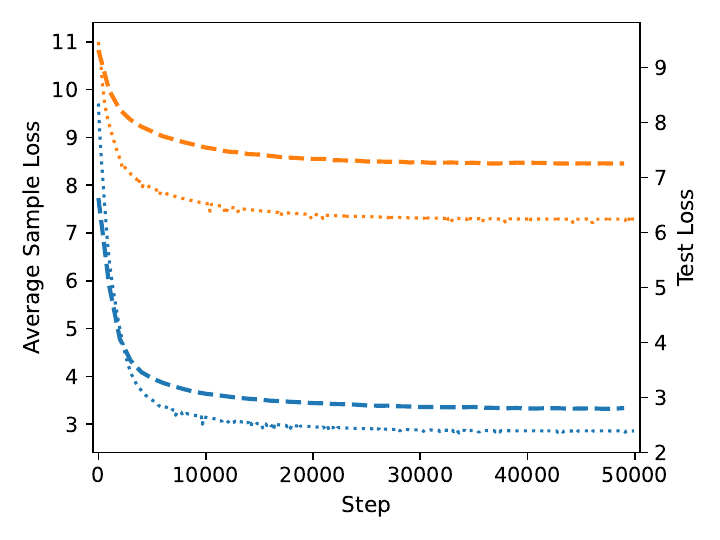}
        \caption{GPT-2 Medium training and test loss}

    \end{subfigure}%
    \begin{subfigure}{0.5\textwidth}
        \centering
        \includegraphics[width=\linewidth]{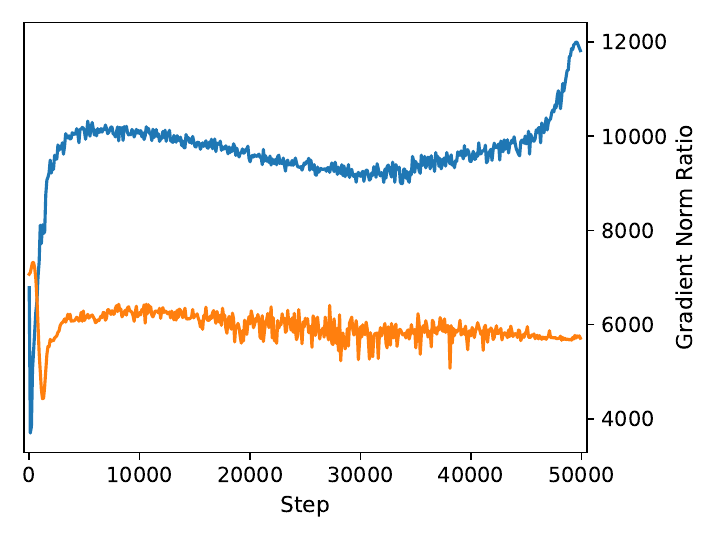}
        \caption{\raggedright\makebox[0pt][l]{GPT-2 Medium gradient norm ratio (\(\sqrt{d} \approx 18859\))}}
    \end{subfigure}

    \caption{Overview of results of GPT-2 \cite{radford2019language} Small and Medium models training and evaluating on the OpenWebText \cite{Gokaslan2019OpenWeb} dataset. Panels (a) and (c) depict the training loss and test loss, while panels (b) and (d) illustrate the gradient norm ratio.}
    \label{fig:gpt2}
\end{figure}
Figure \ref{fig:cifar100} presents the results of the ResNet18 \cite{he2016deep}, ResNet50 \cite{he2016deep}, and ViT-S \cite{dosovitskiy2020image} models on the CIFAR-100 \cite{krizhevsky2009learning} dataset, demonstrating that the LION optimizer consistently outperforms SGD by achieving lower losses and higher accuracies across all evaluated tasks. Moreover, an examination of panels (b), (d), and (f) indicates that the gradient norm ratio for LION closely correlates with the square root of the problem dimension, i.e., $r=\Theta(\sqrt{d})$. This behavior underscores LION's effective scalability and superior performance in complex optimization scenarios.

In addition, we extend our experimentation to include the CIFAR-10 \cite{krizhevsky2009learning} and ImageNet-1K \cite{ILSVRC15} datasets. The results are detailed in \ref{subsec:vt}. This comprehensive evaluation further corroborates our initial findings, demonstrating LION's robust performance across diverse tasks. For the sake of reproducibility and to provide a clear understanding of the experimental context, we provide the detailed training settings in \ref{sec:dts}.

For language tasks, we train the classic BERT-Small \cite{devlin2018bert}, BERT-Base \cite{devlin2018bert}, as well as the GPT-2 Small \cite{radford2019language} and the GPT-2 Medium \cite{radford2019language} models from scratch on OpenWebText \cite{Gokaslan2019OpenWeb} dataset. For the BERT models, we define the loss to be the sum of Masked Language Modeling (MLM) loss and Sentence Order Prediction (SOP) loss, and establish the performance metric to be the total loss on the test dataset. For the GPT-2 models, the loss is defined as the Language Model (LM) loss, which is the cross entropy loss between the predicted next-token probabilities and the actual tokens in the sequence. Similarly, we define the performance metric to be the LM loss on the test dataset.

Analogous to the vision tasks, we adapt our methodology to compute the loss and gradient for language tasks on the OpenWebText \cite{Gokaslan2019OpenWeb} dataset, which comprises about 9 billion tokens. Due to the computational constraints and the sheer volume of data, conducting a full pass to compute the exact loss and gradient is not feasible. Instead, we approximate the full loss $f(\btheta)$ and the full gradient $\nabla f(\btheta)$ by aggregating results across 100 batches. Further experiments show that extending the accumulation to 1000 or 5000 batches does not significantly alter the outcomes.

Results depicted in Figures \ref{fig:bert} and \ref{fig:gpt2} confirm that the LION optimizer yields superior performance in terms of both training and test losses when compared to SGD for BERT and GPT-2 models. The observed performance gap is substantial, as previous research shows that SGD is known to perform worse than Adam \cite{kingma2014adam} on transformers \cite{zhang2024transformers,kunstner2023noise,pan2023toward}, while LION is often shown to be on par with or exceed the performance of the Adam optimizer \cite{chen2024symbolic}. Also, the gradient norm ratio of LION again scales proportionally to $\sqrt{d}$, empirically confirming that $r=\Theta(\sqrt{d})$ holds for typical language tasks.

Additionally we assess the zero-shot performances of GPT-2 models on the WikiText-103 dataset \cite{merity2016pointer} at various intermediate training stages. The results indicate that LION still outperforms SGD by achieving lower perplexity in fewer training steps. We refer the reader to \ref{subsec:lt} for details.

In summary, the experimental results affirm the LION optimizer's distinct advantage over SGD, consistently delivering lower losses and elevated performance across various tasks. Additionally, the gradient norm ratio \( r \) scaling in alignment with \( \Theta(\sqrt{d}) \) across both vision and language models provides effective empirical support for its theoretical efficiency. These findings highlight the LION optimizer's capability in managing diverse computational demands.
\section{Conclusion}\label{sec:c}
This paper provides a comprehensive convergence rate analysis on the LION optimizer, a novel algorithm tailored for deep learning. We conclude that LION converges at the rate of $\bO(\sqrt{d}K^{-1/4})$ under the gradient $\ell_1$ norm, exactly matching the theoretical lower bound for stochastic optimization algorithms with respect to $K$ and achieving the currently best known dependence on the problem dimension $d$. Extensive experiments on vision and language tasks show that the gradient \(\ell_1/\ell_2\) norm ratio aligns with \(\Theta(\sqrt{d})\), confirming that our convergence rate also matches the theoretical lower bound with respect to \(d\) in the empirical sense. Our analysis covers both constrained and unconstrained settings, establishing robust theoretical support for the empirical effectiveness observed with LION. 
\section*{Acknowledgments}
Z. Lin was supported by National Key R\&D Program of China \linebreak (2022ZD0160300), the NSF China (No. 62276004), and Qualcomm. H. Li was supported by NSF China (No. 62476142 and 62006116).

\appendix
\section{Proof of the Theorems}\label{sec:p}
We complete the proofs by proving the Theorem \ref{constrainedtheorem} (i), bounding the $\bdelta^k=\c^k-\nabla f(\btheta^k)$, and finally proving the remaining theorems. 
\subsection{Proof of Theorem \ref{constrainedtheorem} (i)}
\begin{proof}
Borrowed from \cite{xie2024implicit}, from the update of $\btheta^k$, we have
\begin{align}
\begin{aligned}\notag
\|\btheta^{k+1}\|_{\infty}-\frac{1}{\lambda}=& \|(1-\eta\lambda)\btheta^k-\eta\sign(\c^k)\|_{\infty}-\frac{1}{\lambda}\\
\leq& (1-\eta\lambda)\|\btheta^k\|_{\infty}+\eta\|\sign(\c^k)\|_{\infty}-\frac{1}{\lambda}\\
=& (1-\eta\lambda)\|\btheta^k\|_{\infty}+\eta-\frac{1}{\lambda}\\
=& (1-\eta\lambda)\left(\|\btheta^k\|_{\infty}-\frac{1}{\lambda}\right),
\end{aligned}
\end{align}
From $\|\btheta^1\|_{\infty}\leq \frac{1}{\lambda}$, we have the first conclusion. For the second one, we have
\begin{align}
\begin{aligned}\notag
\lambda\<\nabla f(\btheta^k),\btheta^k\>&+\left\|\nabla f(\btheta^k)\right\|_1\geq -\lambda\left\|\nabla f(\btheta^k)\right\|_1\|\btheta^k\|_{\infty}+\left\|\nabla f(\btheta^k)\right\|_1\geq 0,
\end{aligned}
\end{align}
which completes the proof.
\end{proof}
\subsection{The Bounding Lemma}
\begin{lemma}\label{boundinglemma}
Suppose that Assumptions 1-3 hold. Denote $\bdelta^k=\c^k-\nabla f(\btheta^k)$. Then for Algorithm \ref{originallion}, we have
\begin{align}
\begin{aligned}\notag
\frac{1}{K}\sum_{k=1}^K\E\left[\|\bdelta^k\|\right]\leq& \frac{\sigma}{K(1-\beta_2)} + \frac{2L\eta\sqrt{d}}{1-\beta_2} + (|\beta_1-\beta_2|+1-\beta_1) \cdot\frac{\sigma}{\sqrt{1-\beta_2}}.
\end{aligned}
\end{align}
\end{lemma}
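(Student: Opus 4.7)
The plan is to express $\bdelta^k = \c^k - \nabla f(\btheta^k)$ in unrolled form as a decaying initial term plus deterministic gradient differences plus centered stochastic increments, and then bound each piece separately.

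I would start from the pointwise identity
\begin{align*}
\bdelta^k = \beta_1 \bigl(\m^{k-1} - \nabla f(\btheta^{k-1})\bigr) + \beta_1 \bigl(\nabla f(\btheta^{k-1}) - \nabla f(\btheta^k)\bigr) + (1-\beta_1) \bigl(\g^k - \nabla f(\btheta^k)\bigr),
\end{align*}
obtained by substituting $\c^k = \beta_1 \m^{k-1} + (1-\beta_1)\g^k$ into the definition of $\bdelta^k$ and adding/subtracting $\beta_1\nabla f(\btheta^{k-1})$ and $\beta_1\nabla f(\btheta^k)$. The momentum bias $\m^{k-1}-\nabla f(\btheta^{k-1})$ satisfies an analogous EMA-type recursion with coefficient $\beta_2$, so I would unroll it completely to give an explicit representation of $\bdelta^k$ as a decaying initial term of weight $\beta_1\beta_2^{k-1}$, past gradient differences $\nabla f(\btheta^{j-1}) - \nabla f(\btheta^j)$ with weights $\beta_1\beta_2^{k-j}$ for $j\leq k$, and centered stochastic noise increments $\g^j-\nabla f(\btheta^j)$ with weights $\beta_1(1-\beta_2)\beta_2^{k-1-j}$ for $j<k$ together with $(1-\beta_1)$ at $j=k$.

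The deterministic drift pieces I would control as follows. L-smoothness together with the update rule $\btheta^k - \btheta^{k-1} = -\eta\bigl(\sign(\c^{k-1}) + \lambda\btheta^{k-1}\bigr)$ and the invariant $\|\btheta^{k-1}\|_\infty \leq 1/\lambda$ from Theorem \ref{constrainedtheorem}(i) yield $\|\btheta^k - \btheta^{k-1}\|_2 \leq 2\eta\sqrt{d}$, hence $\|\nabla f(\btheta^{k-1}) - \nabla f(\btheta^k)\| \leq 2L\eta\sqrt{d}$. Applying the triangle inequality and summing the geometric weights $\beta_1\beta_2^{k-j}$ over $j = 1,\ldots,k$ collapses to a coefficient of $\beta_1/(1-\beta_2)$, producing the drift contribution $2L\eta\sqrt{d}\beta_1/(1-\beta_2)$, bounded by $2L\eta\sqrt{d}/(1-\beta_2)$. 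For the stochastic pieces, since $\g^j-\nabla f(\btheta^j)$ are conditionally centered with second moment at most $\sigma^2$, the squared norm of the weighted sum equals the sum of squared coefficients times $\sigma^2$, so Jensen's inequality yields the noise bound $\sigma\sqrt{\beta_1^2(1-\beta_2)/(1+\beta_2) + (1-\beta_1)^2}$, which I would then massage into the stated form $(|\beta_1-\beta_2|+1-\beta_1)\sigma/\sqrt{1-\beta_2}$ using the elementary inequality $1-\beta_2 \leq |\beta_1-\beta_2| + 1-\beta_1$.

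Finally I would average over $k = 1, \ldots, K$ and use $\sum_{k=1}^K \beta_2^{k-1} \leq 1/(1-\beta_2)$, so that the initial contribution becomes $\|\m^0-\nabla f(\btheta^1)\|/(K(1-\beta_2))$, producing the first term of the statement under the implicit convention that the initial momentum bias is of order $\sigma$. I expect the main obstacle to be the drift estimate $\|\btheta^k-\btheta^{k-1}\|_2 \leq 2\eta\sqrt{d}$: this is where the constrained structure from Theorem \ref{constrainedtheorem}(i) is indispensable, since without the box invariant the weight-decay term $\lambda\btheta^{k-1}$ could not be controlled coordinate-wise, which would inflate the Lipschitz-driven estimate and destroy the $L\eta\sqrt{d}/(1-\beta_2)$ scaling that underlies the overall convergence rate.
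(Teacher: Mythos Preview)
Your plan is essentially the same as the paper's: unroll $\bdelta^k$ into an initial term, a geometric sum of gradient differences (controlled via $L$-smoothness and the box invariant $\|\btheta^{k-1}\|_\infty\le 1/\lambda$, giving $\|\btheta^k-\btheta^{k-1}\|\le 2\eta\sqrt{d}$), and a centered-noise sum; then average over $k$ and use $\m^0=\g^1$ for the initial bias. The paper does the unrolling by deriving the one-step recursion
\[
\bdelta^k=\beta_2\bdelta^{k-1}-\beta_1\bigl(\nabla f(\btheta^k)-\nabla f(\btheta^{k-1})\bigr)+(\beta_1-\beta_2)\bxi^{k-1}+(1-\beta_1)\bxi^k,
\]
whereas you unroll the momentum bias $\m^{k-1}-\nabla f(\btheta^{k-1})$ instead; the two representations are algebraically equivalent.

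The one point that needs care is the noise step. The paper keeps the noise as \emph{two} separate sums, $(\beta_1-\beta_2)\sum_t\beta_2^{k-t}\bxi^{t-1}$ and $(1-\beta_1)\sum_t\beta_2^{k-t}\bxi^t$, applies the triangle inequality first, and then bounds each sum by $\sigma/\sqrt{1-\beta_2}$; this produces the coefficient $|\beta_1-\beta_2|+1-\beta_1$ directly. Your single-sum treatment yields the sharper quantity $\sigma\sqrt{\beta_1^2(1-\beta_2)/(1+\beta_2)+(1-\beta_1)^2}$, which is indeed dominated by the paper's expression, but the ``elementary inequality'' $1-\beta_2\le|\beta_1-\beta_2|+1-\beta_1$ that you invoke is not by itself enough to show this (for instance, after bounding $\beta_1^2(1-\beta_2)/(1+\beta_2)\le 1-\beta_2$ you would still need $(1-\beta_2)+(1-\beta_1)^2\le(|\beta_1-\beta_2|+1-\beta_1)^2/(1-\beta_2)$, which that inequality does not give). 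Either supply the short additional algebra, or simply split the noise into the two geometric sums as the paper does; then the stated coefficient falls out immediately.
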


\begin{proof}
Further denote $\bxi^k=\g^k-\nabla f(\btheta^k)$. Using the update rule in Algorithm \ref{originallion}, we have
\begin{align}
\begin{aligned}\notag
\bdelta^k=&\beta_1\m^{k-1}+(1-\beta_1)\g^k-\nabla f(\btheta^k)\\
=&\beta_1\beta_2\m^{k-2}+\beta_1(1-\beta_2)\g^{k-1}+(1-\beta_1)\g^k-\nabla f(\btheta^k)\\
=&\beta_2(\c^{k-1}-(1-\beta_1)\g^{k-1})+\beta_1(1-\beta_2)\g^{k-1}+(1-\beta_1)\g^k-\nabla f(\btheta^k)\\
=&\beta_2(\bdelta^{k-1}+\nabla f(\btheta^{k-1}))-\beta_2(1-\beta_1)g^{k-1}+\beta_1(1-\beta_2)\g^{k-1}\\&\hspace*{6.9cm}+(1-\beta_1)\g^k-\nabla f(\btheta^k)\\
=&\beta_2(\bdelta^{k-1}+\nabla f(\btheta^{k-1}))+(\beta_1-\beta_2)(\bxi^{k-1}+\nabla f(\btheta^{k-1}))\\&\hspace*{6.9cm}+(1-\beta_1)(\bxi^{k}+\nabla f(\btheta^{k}))-\nabla f(\btheta^k)\\
=&\beta_2\bdelta^{k-1}-\beta_1(\nabla f(\btheta^k)-\nabla f(\btheta^{k-1}))+(\beta_1-\beta_2)\bxi^{k-1}+(1-\beta_1)\bxi^k\\
=&\beta_2^{k-1}\bdelta^1 + \sum_{t=2}^k\beta_2^{k-t}\Bigg(-\beta_1\left(\nabla f(\btheta^t)-\nabla f(\btheta^{t-1})\right)+(\beta_1-\beta_2)\bxi^{t-1}+(1-\beta_1)\bxi^t\Bigg)\\
=&\beta_2^{k-1}\bdelta^1 - \beta_1\sum_{t=2}^k\beta_2^{k-t}\left(\nabla f(\btheta^t)-\nabla f(\btheta^{t-1})\right)+(\beta_1-\beta_2)\sum_{t=2}^k\beta_2^{k-t}\bxi^{t-1} \\&\hspace*{7.6cm}+ (1-\beta_1)\sum_{t=2}^k\beta_2^{k-t}\bxi^t.
\end{aligned}
\end{align}
Taking expectations to get
\begin{align}
\begin{aligned}\notag
\E\left[\|\bdelta^k\|\right]\leq& \beta_2^{k-1}\E\left[\|\bdelta^1\|\right] + \beta_1\underbrace{\sum_{t=2}^k\beta_2^{k-t}\E\left[\left\|\nabla f(\btheta^t)-\nabla f(\btheta^{t-1})\right\|\right]}_{\text{\rm term (a)}}\\
&+\underbrace{\E\left[\left\| (\beta_1-\beta_2)\sum_{t=2}^k\beta_2^{k-t}\bxi^{t-1}+ (1-\beta_1)\sum_{t=2}^k\beta_2^{k-t}\bxi^t\right\|\right]}_{\text{\rm term (b)}}.
\end{aligned}
\end{align}
For term (a), using the first inequality of Theorem \ref{constrainedtheorem} (i), we have 
\begin{align}
\begin{aligned}\notag
\mbox{term (a)}\leq& L\sum_{t=2}^k\beta_2^{k-t}\E\left[\|\btheta^t-\btheta^{t-1}\|\right]\\
=& L\eta\sum_{t=2}^k\beta_2^{k-t}\E\left[\|\sign(\c^{t-1})+\lambda\btheta^{t-1}\|\right]\\
\leq& 2L\eta\sqrt{d}\sum_{t=2}^k\beta_2^{k-t}\\
\leq& \frac{2L\eta\sqrt{d}}{1-\beta_2}.
\end{aligned}
\end{align}
For term (b), we have
\begin{equation*}
\begin{aligned}
\text{term (b)} &\leq |\beta_1-\beta_2|\E\left[\left\|\sum_{t=2}^k\beta_2^{k-t}\bxi^{t-1}\right\|\right]+(1-\beta_1)\E\left[\left\|\sum_{t=2}^k\beta_2^{k-t}\bxi^{t}\right\|\right] \\
&\leq |\beta_1-\beta_2|\sqrt{\E\left[\left\|\sum_{t=2}^k\beta_2^{k-t}\bxi^{t-1}\right\|^2\right]}+(1-\beta_1)\sqrt{\E\left[\left\|\sum_{t=2}^k\beta_2^{k-t}\bxi^{t}\right\|^2\right]}\\
&=|\beta_1-\beta_2|\sqrt{\sum_{t=2}^k\beta_2^{2(k-t)}\E\left[\left\|\bxi^{t-1}\right\|^2\right]}+(1-\beta_1)\sqrt{\sum_{t=2}^k\beta_2^{2(k-t)}\E\left[\left\|\bxi^{t}\right\|^2\right]}\\
&=|\beta_1-\beta_2|\sqrt{\sigma^2\sum_{t=2}^k\beta_2^{2(k-t)}}+(1-\beta_1)\sqrt{\sigma^2\sum_{t=2}^k\beta_2^{2(k-t)}}\\
&\leq\left(|\beta_1-\beta_2|+(1-\beta_1)\right)\cdot\frac{\sigma}{\sqrt{1-\beta_2^2}}\\
&\leq\left(|\beta_1-\beta_2|+(1-\beta_1)\right)\cdot\frac{\sigma}{\sqrt{1-\beta_2}}.
\end{aligned}
\end{equation*}
Plugging terms (a) and (b) back, we get
\begin{align}
\begin{aligned}\notag
\E\left[\|\bdelta^k\|\right]\leq&\beta_2^{k-1}\E\left[\|\bdelta^1\|\right] + \frac{2L\eta\sqrt{d}}{1-\beta_2} + \left(|\beta_1-\beta_2|+(1-\beta_1)\right)\cdot\frac{\sigma}{\sqrt{1-\beta_2}},
\end{aligned}
\end{align}
and
\begin{align}
\begin{aligned}\notag
\frac{1}{K}\sum_{k=1}^K\E\left[\|\bdelta^k\|\right]\leq& \frac{1}{K(1-\beta_2)}\E\left[\|\bdelta^1\|\right] +\frac{2L\eta\sqrt{d}}{1-\beta_2}+ \left(|\beta_1-\beta_2|+(1-\beta_1)\right)\cdot\frac{\sigma}{\sqrt{1-\beta_2}}.
\end{aligned}
\end{align}
Initializing $\m^0=\g^1$, we have $\E\left[\|\bdelta^1\|\right]=\E\left[\|\g^1-\nabla f(\btheta^1)\|\right]\leq\sigma$, which completes the proof.
\end{proof}
\subsection{Proof of Theorem \ref{constrainedtheorem} (ii)}\label{pf}
\begin{proof}
As the gradient is Lipschitz, we have
\begin{align}
\begin{aligned}\notag
f&(\btheta^{k+1})-f(\btheta^k)\\\leq&\<\nabla f(\btheta^k),\btheta^{k+1}-\btheta^k\>+\frac{L}{2}\|\btheta^{k+1}-\btheta^k\|^2\\
=&-\eta\<\nabla f(\btheta^k),\sign(\c^k)+\lambda\btheta^k\>+\frac{L\eta^2}{2}\|\sign(\c^k)+\lambda\btheta^k\|^2\\
=&-\eta\lambda\<\nabla f(\btheta^k),\btheta^k\>-\eta\<\nabla f(\btheta^k),\sign(\nabla f(\btheta^k))\>\\&-\eta\<\nabla f(\btheta^k),\sign(\c^k)-\sign(\nabla f(\btheta^k))\>+\frac{L\eta^2}{2}\|\sign(\c^k)+\lambda\btheta^k\|^2\\
=&-\eta\lambda\<\nabla f(\btheta^k),\btheta^k\>-\eta\left\|\nabla f(\btheta^k)\right\|_1\\&-\eta\<\nabla f(\btheta^k),\sign(\c^k)-\sign(\nabla f(\btheta^k))\>+\frac{L\eta^2}{2}\|\sign(\c^k)+\lambda\btheta^k\|^2\\
\leq&-\eta\lambda\<\nabla f(\btheta^k),\btheta^k\>-\eta\left\|\nabla f(\btheta^k)\right\|_1\\&+\eta\sum_{i=1}^d\left|\nabla_i f(\btheta^k)\right|\left|\sign(\c_i^k)-\sign(\nabla_i f(\btheta^k))\right|+2dL\eta^2.
\end{aligned}
\end{align}

\noindent If $\sign(\c_i^k)=\sign(\nabla_i f(\btheta^k))$, we have
\begin{align}
\begin{aligned}\notag
\left|\nabla_i f(\btheta^k)\right|\left|\sign(\c_i^k)-\sign\left(\nabla_i f(\btheta^k)\right)\right|=0.
\end{aligned}
\end{align}

\noindent If $\sign(\c_i^k)\neq\sign(\nabla_i f(\btheta^k))$, we have $\c_i^k\nabla_i f(\btheta^k)\leq 0$ and
\begin{align}
\begin{aligned}\notag
\left|\nabla_i f(\btheta^k)\right|&\left|\sign(\c_i^k)-\sign\left(\nabla_i f(\btheta^k)\right)\right|= 2|\nabla_i f(\btheta^k)|\leq 2|\nabla_i f(\btheta^k)-\c_i^k|.
\end{aligned}
\end{align}
Combing the above two cases and denoting $\bdelta^k=\c^k-\nabla f(\btheta^k)$, we have
\begin{align}
\begin{aligned}\notag
f&(\btheta^{k+1})-f(\btheta^k)\\\leq&-\eta\lambda\<\nabla f(\btheta^k),\btheta^k\>-\eta\left\|\nabla f(\btheta^k)\right\|_1+2\eta\|\bdelta^k\|_1+2dL\eta^2\\
\leq&-\eta\lambda\<\nabla f(\btheta^k),\btheta^k\>-\eta\left\|\nabla f(\btheta^k)\right\|_1+2\eta\sqrt{d}\|\bdelta^k\|+2dL\eta^2.
\end{aligned}
\end{align}
Taking expectations, summing over $k=1,\cdots,K$, and using Lemma \ref{boundinglemma}, we get
\begin{align}
\begin{aligned}\notag
\E&\left[f(\btheta^{K+1})\right]-f(\btheta^1)\\
\leq&-\eta\sum_{k=1}^K\E\left[\lambda\<\nabla f(\btheta^k),\btheta^k\>+\left\|\nabla f(\btheta^k)\right\|_1\right]+2\eta\sqrt{d}\sum_{k=1}^K\E\left[\|\bdelta^k\|\right]+2KdL\eta^2\\
\leq&-\eta\sum_{k=1}^K\E\left[\lambda\<\nabla f(\btheta^k),\btheta^k\>+\left\|\nabla f(\btheta^k)\right\|_1\right]\\&+2\eta\sqrt{d}\left(\frac{\sigma}{1-\beta_2} + \frac{2KL\eta\sqrt{d}}{1-\beta_2} + (|\beta_1-\beta_2|+1-\beta_1) \cdot\frac{K\sigma}{\sqrt{1-\beta_2}}\right)+2KdL\eta^2.
\end{aligned}
\end{align}
Letting $\beta_1=1-\frac{c_1}{\sqrt{K}}$, $\beta_2=1-\frac{c_2}{\sqrt{K}}$, and $\eta=\frac{c_3}{\sqrt{d}K^{3/4}}$, we have
\begin{align}
\begin{aligned}\label{equ2}
\E&\left[f(\btheta^{K+1})\right]-f^\star + \eta\sum_{k=1}^K\E\left[\lambda\<\nabla f(\btheta^k),\btheta^k\>+\left\|\nabla f(\btheta^k)\right\|_1\right]\\
\leq& f(\btheta^1)-f^\star+\\&2\eta\sqrt{d}\left(\frac{\sigma}{1-\beta_2} + \frac{2KL\eta\sqrt{d}}{1-\beta_2} + (|\beta_1-\beta_2|+1-\beta_1) \cdot\frac{K \sigma}{\sqrt{1-\beta_2}}\right)+2KdL\eta^2\\
=& f(\btheta^1)-f^\star+\frac{2c_3\sigma}{c_2K^{1/4}} + \frac{4Lc_3^2}{c_2} + (|c_1-c_2|+c_1)\cdot\frac{2c_3\sigma}{\sqrt{c_2}}+\frac{2Lc_3^2}{K^{1/2}}\\
\leq&f(\btheta^1)-f^\star+\frac{2c_3\sigma}{c_2K^{1/4}} + \frac{4Lc_3^2}{c_2} + \frac{2c_3\sigma(2c_1+c_2)}{\sqrt{c_2}}+\frac{2Lc_3^2}{K^{1/2}},
\end{aligned}
\end{align}
and
\begin{align}
\begin{aligned}\notag
\frac{1}{K}\sum_{k=1}^K&\E\left[\lambda\<\nabla f(\btheta^k),\btheta^k\>+\left\|\nabla f(\btheta^k)\right\|_1\right]\\
\leq&\sqrt{d}\frac{f(\btheta^1)-f^\star}{c_3K^{1/4}}+\frac{2\sigma\sqrt{d}}{c_2K^{1/2}} + \frac{4Lc_3\sqrt{d}}{c_2K^{1/4}}+ \frac{2\sigma(2c_1+c_2)\sqrt{d}}{\sqrt{c_2}K^{1/4}}+\frac{2Lc_3\sqrt{d}}{K^{3/4}},
\end{aligned}
\end{align}
which completes the proof.
\end{proof}
\subsection{Proof of Lemma \ref{lemma:finite-horizon-bound}}
\begin{proof}
Borrowed from \cite{xie2024implicit}, for $k=1,\ldots,K-1$, we have
\begin{align}
\begin{aligned}\notag
\|\btheta^{k+1}\|_{\infty}-\frac{1}{\lambda}
&\leq
(1-\eta\lambda)
\left(
\|\btheta^k\|_{\infty}-\frac{1}{\lambda}
\right)\\
&\leq
(1-\eta\lambda)^k
\left(
\|\btheta^1\|_{\infty}-\frac{1}{\lambda}
\right)\\
&\leq
-\frac{1}{2\lambda}(1-\eta\lambda)^k.
\end{aligned}
\end{align}
Since $\eta\lambda\leq \frac{1}{2K}$, Bernoulli's inequality implies that, for every $k=1,\ldots,K-1$,
\begin{align}
\begin{aligned}\notag
(1-\eta\lambda)^k
\geq
1-k\eta\lambda
\geq
1-K\eta\lambda
\geq
\frac{1}{2}.
\end{aligned}
\end{align}
Therefore,
\begin{align}
\begin{aligned}\notag
\|\btheta^{k+1}\|_{\infty}
\leq
\frac{1}{\lambda}
-
\frac{1}{4\lambda}
=
\frac{3}{4\lambda},
\end{aligned}
\end{align}
for every $k=1,\ldots,K-1$. Together with
$\|\btheta^1\|_{\infty}\leq \frac{1}{2\lambda}\leq \frac{3}{4\lambda}$,
we obtain
\begin{align}
\begin{aligned}\notag
\lambda\|\btheta^k\|_{\infty}\leq \frac{3}{4},
\qquad
k=1,2,\ldots,K,
\end{aligned}
\end{align}
which completes the proof.
\end{proof}
\subsection{Proof of Theorem \ref{theorem2}}
\begin{proof}
As stated in Section \ref{sec:unccase}, since all assumptions and conditions of Theorem \ref{constrainedtheorem} are satisfied, all the conclusions and intermediate inequalities in its proof are applicable here, although we are studying the unconstrained case.
By Lemma \ref{lemma:finite-horizon-bound}, we have 
\begin{align}
\begin{aligned}\notag
\lambda\<\nabla f(\btheta^k),\btheta^k\>\geq -\lambda\left\|\nabla f(\btheta^k)\right\|_1\|\btheta^k\|_{\infty}\geq -\frac{3}{4}\left\|\nabla f(\btheta^k)\right\|_1.
\end{aligned}
\end{align}
Applying Theorem \ref{constrainedtheorem}, we get
\begin{align}
\begin{aligned}\notag
&\frac{1}{4K}\sum_{k=1}^K\E\left[\left\|\nabla f(\btheta^k)\right\|_1\right]\\
&= \frac{1}{K}\sum_{k=1}^K\E\left[-\frac{3}{4}\left\|\nabla f(\btheta^k)\right\|_1+\left\|\nabla f(\btheta^k)\right\|_1\right]\\
&\leq \frac{1}{K}\sum_{k=1}^K\E\left[\lambda\<\nabla f(\btheta^k),\btheta^k\>+\left\|\nabla f(\btheta^k)\right\|_1\right]\\
&\leq\sqrt{d}\frac{f(\btheta^1)-f^\star}{c_3K^{1/4}}+\frac{2\sigma\sqrt{d}}{c_2K^{1/2}} + \frac{4Lc_3\sqrt{d}}{c_2K^{1/4}}+\frac{2\sigma(2c_1+c_2)\sqrt{d}}{\sqrt{c_2}K^{1/4}}+\frac{2Lc_3\sqrt{d}}{K^{3/4}},
\end{aligned}
\end{align}
which completes the proof.
\end{proof}

\section{Additional Experiments}\label{sec:ae}
\subsection{Vision Tasks: Evaluating on CIFAR-10 and ImageNet-1K Datasets}\label{subsec:vt}
As is stated in Section \ref{sec:exp}, we perform additional experiments on CIFAR-10 \cite{krizhevsky2009learning} and ImageNet-1K \cite{ILSVRC15} datasets, with results reported in Figures \ref{fig:cifar10} and \ref{fig:imagenet}, respectively. These results further corroborate our findings.
\begin{figure}[tbhp]
    \centering
    \includegraphics[width=\textwidth]{legend_vision.pdf}

    \begin{subfigure}{0.5\textwidth}
        \centering
        \includegraphics[width=\linewidth]{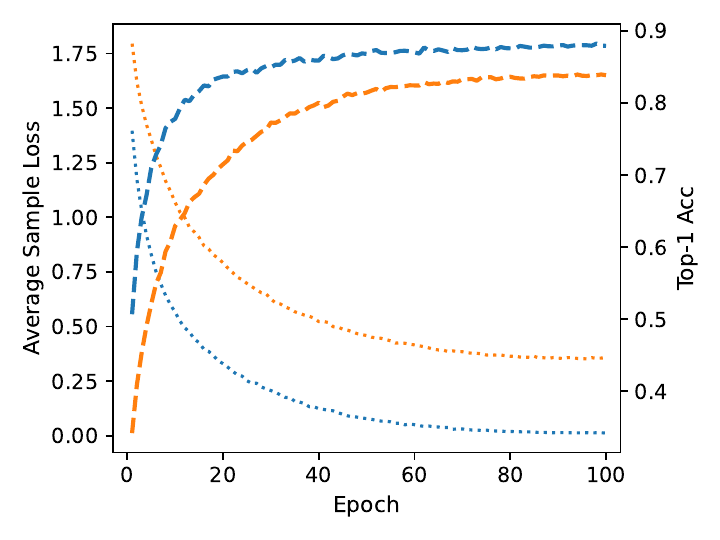}
        \caption{ResNet18 training loss and Top-1 accuracy}
    \end{subfigure}%
    \begin{subfigure}{0.5\textwidth}
        \centering
        \includegraphics[width=\linewidth]{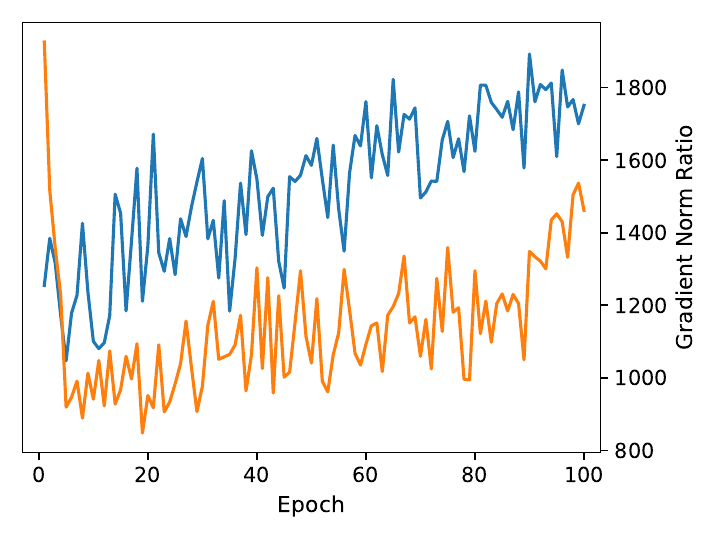}
        \caption{ResNet18 gradient norm ratio ($\sqrt{d}\approx3343$)}
    \end{subfigure}

    \begin{subfigure}{0.5\textwidth}
        \centering
        \includegraphics[width=\linewidth]{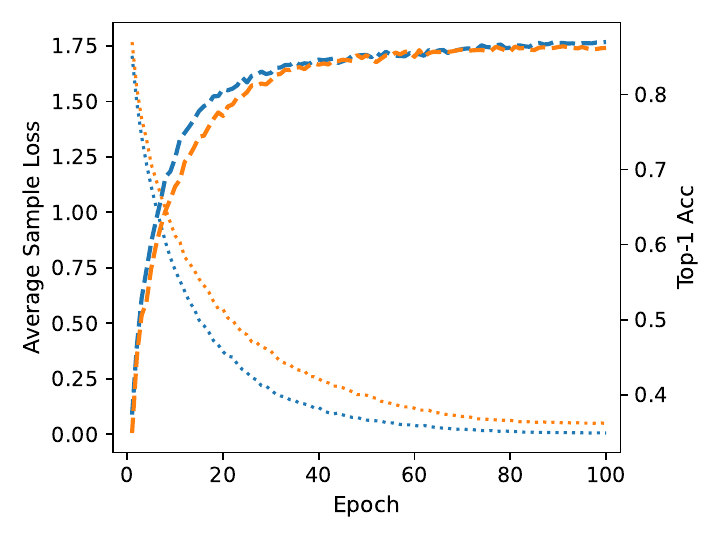}
        \caption{ResNet50 training loss and Top-1 accuracy}
    \end{subfigure}%
    \begin{subfigure}{0.5\textwidth}
        \centering
        \includegraphics[width=\linewidth]{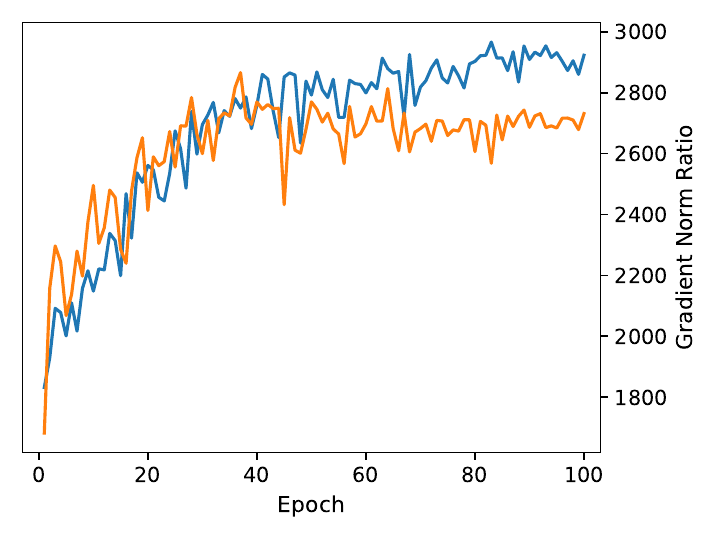}
        \caption{ResNet50 gradient norm ratio ($\sqrt{d}\approx 4850$)}
    \end{subfigure}

    \begin{subfigure}{0.5\textwidth}
        \centering
        \includegraphics[width=\linewidth]{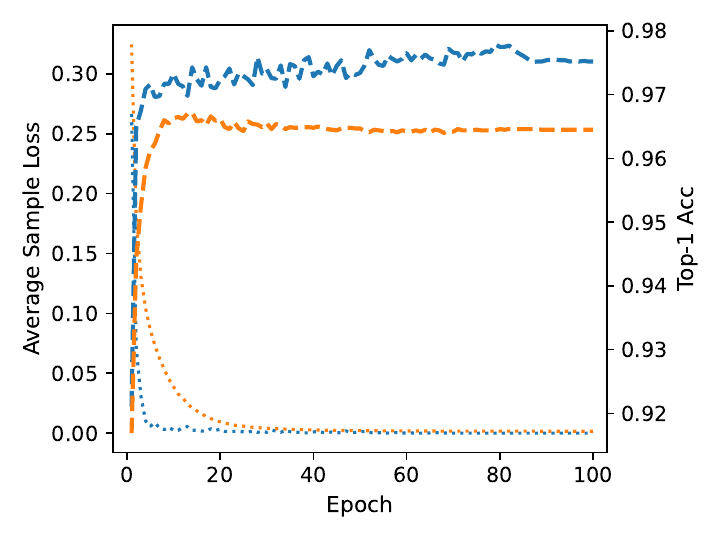}
        \caption{ViT-S training loss and Top-1 accuracy}
    \end{subfigure}%
    \begin{subfigure}{0.5\textwidth}
        \centering
        \includegraphics[width=\linewidth]{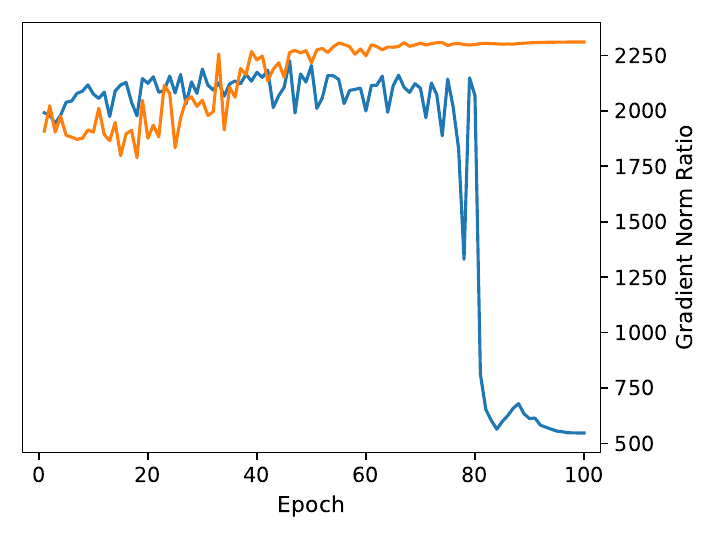}
        \caption{ViT-S gradient norm ratio ($\sqrt{d}\approx 4655$)}
    \end{subfigure}
    \caption{Overview of results of ResNet18 \cite{he2016deep}, ResNet50 \cite{he2016deep}, and ViT-S \cite{dosovitskiy2020image} models training and evaluating on CIFAR-10 \cite{krizhevsky2009learning} dataset.}
    \label{fig:cifar10}
\end{figure}
\begin{figure}[tbhp]
    \centering
    \includegraphics[width=\textwidth]{legend_vision.pdf}

    \begin{subfigure}{0.5\textwidth}
        \centering
        \includegraphics[width=\linewidth]{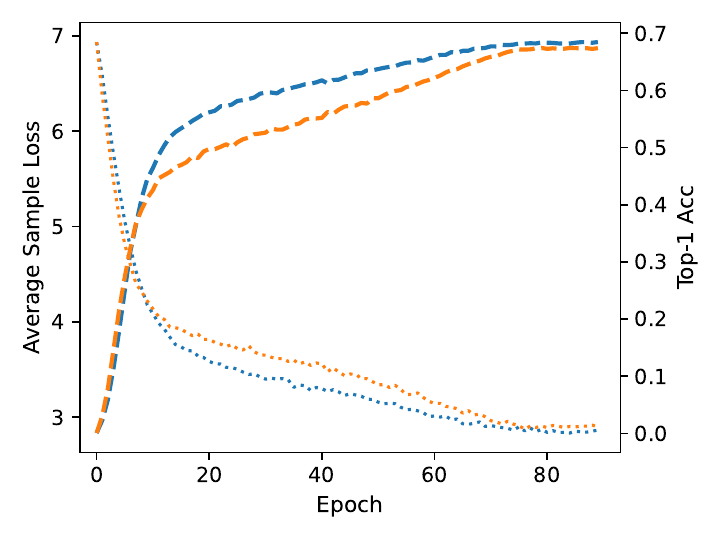}
        \caption{ResNet18 training loss and Top-1 accuracy}
    \end{subfigure}%
    \begin{subfigure}{0.5\textwidth}
        \centering
        \includegraphics[width=\linewidth]{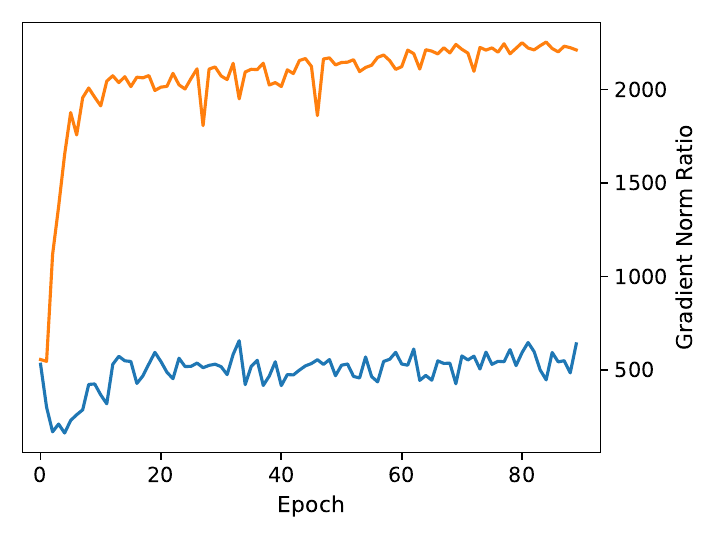}
        \caption{ResNet18 gradient norm ratio ($\sqrt{d}\approx3350$)}
    \end{subfigure}

    \begin{subfigure}{0.5\textwidth}
        \centering
        \includegraphics[width=\linewidth]{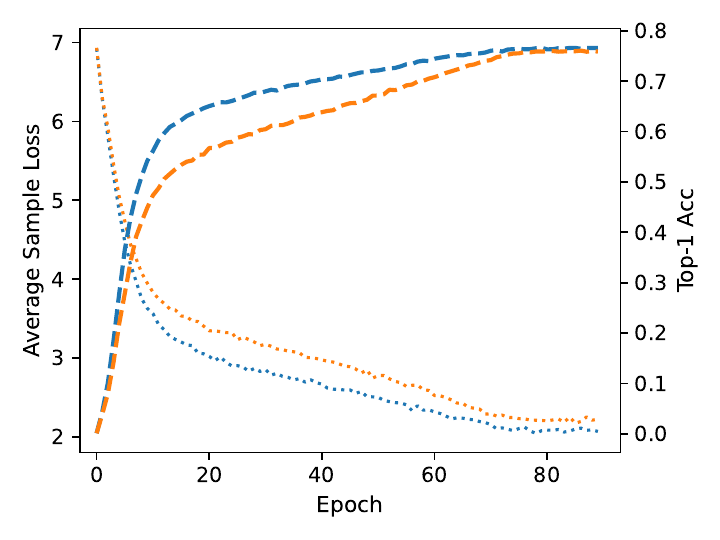}
        \caption{ResNet50 training loss and Top-1 accuracy}
    \end{subfigure}%
    \begin{subfigure}{0.5\textwidth}
        \centering
        \includegraphics[width=\linewidth]{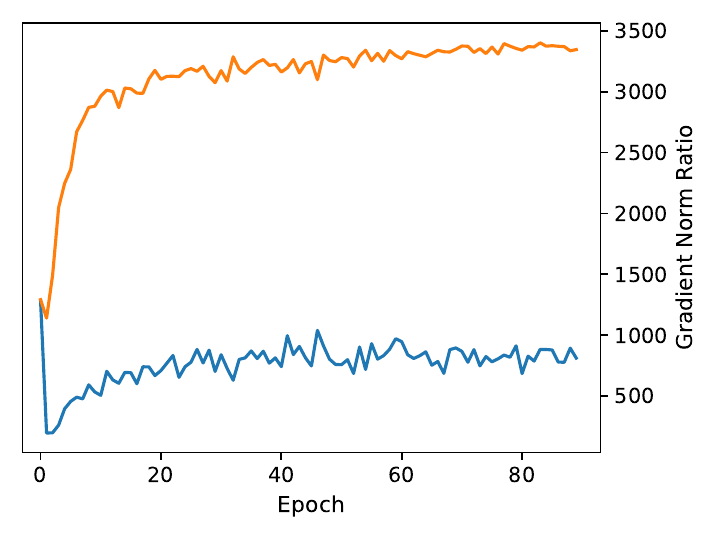}
        \caption{ResNet50 gradient norm ratio ($\sqrt{d}\approx 4869$)}
    \end{subfigure}

    \begin{subfigure}{0.5\textwidth}
        \centering
        \includegraphics[width=\linewidth]{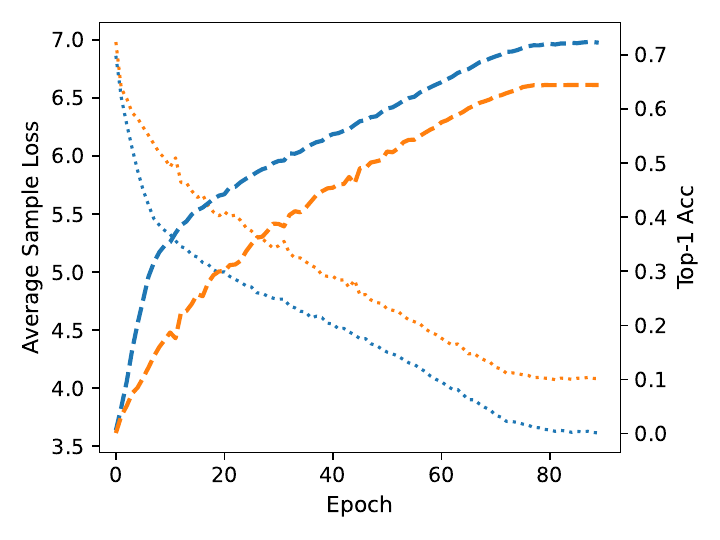}
        \caption{ViT-S training loss and Top-1 accuracy}
    \end{subfigure}%
    \begin{subfigure}{0.5\textwidth}
        \centering
        \includegraphics[width=\linewidth]{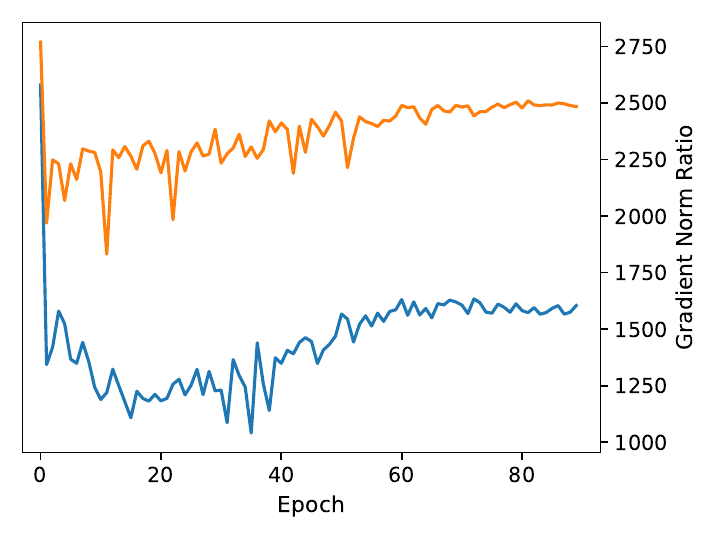}
        \caption{ViT-S gradient norm ratio ($\sqrt{d}\approx 5484$)}
    \end{subfigure}
    \caption{Overview of results of ResNet18 \cite{he2016deep}, ResNet50 \cite{he2016deep}, and ViT-S \cite{dosovitskiy2020image} models training and evaluating on ImageNet-1K \cite{ILSVRC15} dataset.}
    \label{fig:imagenet}
\end{figure}
\subsection{Language Tasks: Assessing the Zero-shot Performance on WikiText-103 Dataset}\label{subsec:lt}
In this task, we assess the zero-shot performance of GPT-2 models by extracting checkpoints at 10k, 20k, 30k, 40k, and 50k training steps. We use these intermediate weights to compute perplexity on the WikiText-103 dataset \cite{merity2016pointer} as a measure of generalization performance without any task-specific finetuning.

The results are shown in Table \ref{tab:performance-metrics}. To ensure numerical consistency in scale, we follow the common practice of taking the logarithm of perplexity values. Notably, LION surpasses SGD's final performance from the outset, achieving lower perplexity values even at the earliest training stage.

\begin{table}[tbhp]
    \centering
    \caption{Log perplexity values on WikiText-103 dataset for GPT-2 Small and Medium models training with SGD and LION optimizers, at 10k, 20k, 30k, 40k, and 50k training steps. LION outperforms SGD's final result from the beginning, demonstrating faster convergence to better performance.}
    \label{tab:performance-metrics}
    \begin{tabular}{ccccccc}
        \toprule
        \multirow{2}{*}{\textbf{Model}} & \multirow{2}{*}{\textbf{Optimizer}} & \multicolumn{5}{c}{\textbf{Steps}} \\
        & & 10k & 20k & 30k & 40k & 50k \\
        \midrule
        \multirow{2}{*}{\textbf{GPT-2 Small}} & SGD & 8.202 & 8.005 & 7.922 & 7.892 & 7.887 \\
                                     & LION & \textbf{3.659} & \textbf{3.409} & \textbf{3.326} & \textbf{3.293} & \textbf{3.284} \\
        \midrule
        \multirow{2}{*}{\textbf{GPT-2 Medium}} & SGD & 8.029 & 7.807 & 7.737 & 7.715 & 7.712 \\
                                      & LION & \textbf{3.221} & \textbf{3.004} & \textbf{2.918} & \textbf{2.882} & \textbf{2.876} \\
        \bottomrule
    \end{tabular}
\end{table}
\section{Detailed Training Settings}\label{sec:dts}
\textbf{CIFAR-10 and CIFAR-100 Training}: Given that optimal learning rates may differ across datasets and models, we perform a grid search over the learning rates set to [3e-3, 1e-3, 3e-4, 1e-4, 3e-5, 1e-5, 3e-6, 1e-6]. This approach is intended to span a broad range of typical learning rate values and identify the most suitable rate for each scenario. We restrict our training to 100 epochs and employ the classic cosine annealing learning rate schedule approach. We keep the batch size to 64. To prevent overfitting, we set weight decay to 0.1 globally. Given that Vision Transformers are originally designed for handling images of at least 224$\times$224 pixels, a common method is to finetune an ImageNet pretrained model on the CIFAR datasets. We employ this strategy and demonstrate the superior performance of the LION optimizer in such finetuning tasks. These experiments are conducted on a single NVIDIA A6000 GPU.

\textbf{ImageNet Training}: We adhere to the standard training configuration commonly used for numerous well-established deep learning optimizers \cite{he2016deep}. It contains 90 epochs in total, initiating with a 10-epoch warmup phase for the learning rate, followed by the remaining epochs controlled by a cosine annealing scheduler. For SGD, we set the learning rate to 0.5 to align with the vanilla setting. For LION, we still perform a grid search on the same candidate list. The batch size is set to 2048. These experiments are conducted on 8 NVIDIA A6000 GPUs.

\textbf{BERT and GPT-2 Training:} We employ the prevalent Megatron-LM \cite{shoeybi2019megatron} framework developed by NVIDIA to train and evaluate models on the OpenWebText \cite{Gokaslan2019OpenWeb} Dataset. We make only minimal modifications to the original training configurations by setting the SGD learning rate to 1e-4, the LION learning rate to 1e-5, and the global batch size to 640, then apply this setting universally to all language tasks. We accumulate the training loss and gradient information per 100 batches, covering a total of 64000 samples. The whole training process consists of 50000 steps and is conducted on 16 NVIDIA A100 GPUs.
\bibliographystyle{elsarticle-num} 
\bibliography{mybibfile}
\end{document}